\definecolor{mygreen}{rgb}{0,0.6,0}
\definecolor{mygray}{rgb}{0.5,0.5,0.5}
\definecolor{mymauve}{rgb}{0.58,0,0.82}
\newcommand\pythonstyle{\lstset{ 
  backgroundcolor=\color{white},   % choose the background color; you must add \usepackage{color} or \usepackage{xcolor}; should come as last argument
  basicstyle=\footnotesize,        % the size of the fonts that are used for the code
  breakatwhitespace=false,         % sets if automatic breaks should only happen at whitespace
  breaklines=true,                 % sets automatic line breaking
  captionpos=b,                    % sets the caption-position to bottom
  commentstyle=\color{mygreen},    % comment style
  deletekeywords={...},            % if you want to delete keywords from the given language
  escapeinside={\%*}{*)},          % if you want to add LaTeX within your code
  extendedchars=true,              % lets you use non-ASCII characters; for 8-bits encodings only, does not work with UTF-8
  firstnumber=1000,                % start line enumeration with line 1000
  frame=single,	                   % adds a frame around the code
  keepspaces=true,                 % keeps spaces in text, useful for keeping indentation of code (possibly needs columns=flexible)
  keywordstyle=\color{blue},       % keyword style
  language=Octave,                 % the language of the code
  morekeywords={*,...},            % if you want to add more keywords to the set
  numbers=none,                    % where to put the line-numbers; possible values are (none, left, right)
  numbersep=5pt,                   % how far the line-numbers are from the code
  numberstyle=\tiny\color{mygray}, % the style that is used for the line-numbers
  rulecolor=\color{black},         % if not set, the frame-color may be changed on line-breaks within not-black text (e.g. comments (green here))
  showspaces=false,                % show spaces everywhere adding particular underscores; it overrides 'showstringspaces'
  showstringspaces=false,          % underline spaces within strings only
  showtabs=false,                  % show tabs within strings adding particular underscores
  stepnumber=2,                    % the step between two line-numbers. If it's 1, each line will be numbered
  stringstyle=\color{mymauve},     % string literal style
  tabsize=2,	                   % sets default tabsize to 2 spaces
  title=\lstname                   % show the filename of files included with \lstinputlisting; also try caption instead of title
}}
\newcommand\pythoninline[1]{{\pythonstyle\lstinline!#1!}}
\newcommand{\MixupS}{\textit{MixupE}}
\newcommand{\inputmixup}{Mixup}
\newcommand{\nolight}{}
\definecolor{mygray}{gray}{0.4}
\newcommand{\g}[2]{#1\textsubscript{\textcolor{mygray}{$\pm$#2}}}
\newcommand{\Beta}{{\operatorname{Beta}}}
\newcommand{\bx}{{\mathbf{x}}}
\newcommand{\by}{{\mathbf{y}}}
\newcommand{\bz}{{\mathbf{z}}}
\newcommand{\bxp}{{\mathbf{x}^\prime}}
\newcommand{\cX}{{\cal X}}
\newcommand{\bR}{{\mathbb R}}
\newcommand{\cY}{{\cal Y}}
\def\tx{\tilde{\mathbf{x}}}
\def\ty{\tilde{\mathbf{y}}}
\newcommand{\bE}{\mathbb{E}}
\def\Ln{L_n^{std}}
\def\Lnmix{L^{\text{mix}}_n}
\title{MixupE: Understanding and Improving Mixup \\ from Directional Derivative Perspective}
\author[1]{Yingtian Zou \thanks{ indicates equal contribution.}}
\author[2,3]{
Vikas~Verma $^*$}
\author[2]{Sarthak~Mittal}
\author[1]{Wai~Hoh~Tang}
\author[4]{Hieu~Pham}
\author[3]{Juho~Kannala}
\author[2]{Yoshua~Bengio}
\author[3]{Arno~Solin}
\author[1]{Kenji~Kawaguchi}
\affil[1]{%
    National University of Singapore\\
    Singapore
}
\affil[2]{%
    Universite de Montreal, Mila\\
    Canada
}
\affil[3]{%
    Aalto University\\
    Finland
}
\affil[4]{%
    Google Brain\\
    USA
}
\begin{document}
\maketitle

\begin{abstract}
Mixup is a popular data augmentation technique for training deep neural networks where additional samples are generated by linearly interpolating pairs of inputs and their labels. This technique is known to improve the generalization performance in many learning paradigms and applications. In this work, we first analyze Mixup and show that it implicitly regularizes infinitely many directional derivatives of all orders. 
% We then propose a new method to improve Mixup based on the novel insight, and we also justify the theoretical improvement of generalization over vanilla Mixup. 
Based on this new insight, we propose an improved version of Mixup, theoretically justified to deliver better generalization performance than the vanilla Mixup.
To demonstrate the effectiveness of the proposed method, we conduct experiments across various domains such as images, tabular data, speech, and graphs. Our results show that the proposed method improves Mixup across multiple datasets using a variety of architectures, for instance, exhibiting an improvement over Mixup by 0.8\% in ImageNet top-1 accuracy. The code is available at \href{https://github.com/oneHuster/MixupE}{https://github.com/oneHuster/MixupE}.
\end{abstract}

\section{Introduction}
Deep Neural Networks (DNNs) represent a class of very powerful function approximators, and large-scale DNNs have achieved state-of-the-art performance in many application areas such as computer vision \citep{krizhevsky2012imagenet}, natural language understanding \citep{devlin2018bert}, speech recognition \citep{hinton2012deep}, reinforcement learning \citep{silver2016mastering}, and natural sciences \citep{Jumper2021}. In a supervised learning setting, DNNs are typically trained to minimize their average error on the training samples. This training principle is known as Empirical Risk Minimization (ERM) \citep{vapnick1998statistical}.

%Although being a simple training principle, networks trained with ERM perform well only on the samples which are very close (in the input space) to the training samples. When evaluated on slightly different samples, these networks often provide incorrect predictions with strikingly high confidence. 

%Although being a simple training principle, training neural networks with ERM has two major problems: (a) poor generalization to test samples which come from the distribution slightly different from the training samples: neural networks trained with ERM are prone to \textit{memorize} training data [citep] instead of learning meaningful concepts from the training data (b) Susceptibility to the adversarial examples: neural network's predictions can change drastically if an adversarial perturbation is added to the sample [citep]. These kind of limitations can be addressed by either 

Although being a simple training principle, training neural networks with ERM has a major problem: in the absence of regularization techniques, instead of learning meaningful concepts, neural networks trained with ERM are prone to \textit{memorize} training data \citep{arpit}. This results in poor generalization to test samples, which come from a distribution slightly different from the training samples. 
%[TODO: write about different regularizers] 
To address this limitation of ERM, Mixup \citep{zhang2017mixup} has recently been proposed as an alternative training principle. In a nutshell, instead of training a neural network on individual samples and their corresponding outputs, Mixup trains a neural network on the linear interpolation of the samples and the corresponding linear interpolation of the outputs. It fosters a smoother decision boundary and reduces the risk of overfitting. Therefore, understanding its implicit regularization helps shed light on generalization. 

Mathematically, let us suppose that $\bx_i$ and  $\bx_j$ are input vectors corresponding to two randomly drawn samples $i$ and $j$ from the training distribution, and $\by_i$ and $\by_j$ are their one-hot encoded labels. Then, Mixup constructs a training sample as $\tx = \lambda \bx_i + (1-\lambda)\bx_j$ and $\ty = \lambda \by_i + (1-\lambda)\by_j$, where $\lambda \in  [0,1]$. Training with this kind of synthetic samples encourages the model to learn a function where linear interpolation in the input vectors leads to the linear interpolation of the corresponding targets. This kind of constraint limits the model complexity, thus limiting their ability to memorize training samples. 
Mixup can be interpreted as a data-agnostic \textit{data augmentation} technique that does not require expert knowledge to create additional training samples.
Mixup can also be interpreted from the viewpoint of the \textit{Vicinal Risk Minimization} (VRM) principle \citep{chapelle2000vicinal}. In this view, Mixup proposes a generic vicinal distribution based on the interpolation of training samples and their associated targets, and the additional training samples are drawn from such vicinal distribution around each training sample \citep{zhang2017mixup}. 

Despite its simplicity and minimal computation overhead, Mixup and its variants have been shown to achieve state-of-the-art in many tasks such as but not limited to, image classification \citep{yun2019cutmix, kim2020puzzle, faramarzi2020patchup}, object detection \citep{jeong2020interpolationbased}, speech recognition \citep{9054719,Tomashenko}, text classification  \citep{guo2019augmenting,zhang2020seqmix}, and medical image segmentation \citep{panfilov2019improving}. Recently, Mixup was theoretically analyzed and shown to be approximately equivalent to adding a second-order regularization term to the standard loss function \citep{zhang2021does}. However, if the benefit of Mixup can be explained by a second-order regularization, the following natural question arises: why can we not replace Mixup with this second-order regularization directly? Unfortunately, the answer is no because the second-order terms are input-specific, thus yielding a complicated form.
% we show that Mixup is equivalent to adding infinitely many regularization terms on the directional derivatives of all orders instead of only the second order.
%cannot be replaced by the second-order regularization because Mixup

In this paper, we show that Mixup is equivalent to implicitly adding infinitely many regularization terms on the directional derivatives of all orders instead of a complex second-order form as \citep{zhang2021does} for ERM. Our analysis provides a feasible insight to design the regularization in practice.
Based on this novel insight, this paper proposes to explicitly enhance the implicit regularization effect of Mixup on the directional derivatives. 
Instead of computing all infinite regularization terms, 
% we approximate the dominant term from accessible results in each forward propagation for computational cheap.
we efficiently approximate the dominant term using accessible results during each forward propagation, ensuring computational efficiency.
We name this method as \MixupS{} (\textbf{Mixup} \textbf{E}nhanced). Furthermore, we give a generalization guarantee of \MixupS{}, which reveals that it achieves lower complexity compared to vanilla Mixup. Figure~\ref{fig:loss} shows the training and test loss of ERM, Mixup, and \MixupS{}. We can see that \MixupS{} has higher training loss, implying that it works as a stronger regularizer than Mixup and ERM. This subsequently results in better generalization (i.e. lower test loss) than Mixup and ERM.

% For the computational efficiency, we focus on enhancing the regularization effect on the first order directional derivatives, while Mixup implicitly takes care of higher order directional derivatives.                      

%There have also ben research attempts to analyze the properties of Mixup from theoretical point of view [citep \textcolor{red}{[Kenji]}]. 

%Mixup draws inspiration from \textit{Vicinal Risk Minimization} (VRM) principle and proposes a generic vicinal distribution.  

%In VRM, first a \textit{vicinity} is defined around the samples in the training data, and then additional training samples are drawn from this vicinity to   

%This kind of limitation of ERM can be addressed by training the neural networks with \textit{semantically similar} but different samples to the training data, known as \textit{data augmentation}. Although data augmentation is an effective method for training modern day large scale neural networks, 

%\kk{If I have time, maybe add one plot showing that mixup cannot be replaced by the first and second order reg here}

%\textcolor{red}{[Kenji]}
%What is the problem with Normal Mixup? 

%\textcolor{red}{[Kenji]}
%How we address that? 

To understand the  benefits of \MixupS{} empirically, we conduct experiments on a variety of datasets, such as images, tabular data and speech data, using various architectures such as LeNet~\citep{lenet}, VGG~\citep{vgg}, ResNet~\citep{he2016deep}, Vision Transformer \citep[ViT,][]{dosovitskiy2020image}, and CoAtNet~\citep{dai2021coatnet}. In our experiments, we consistently see that \MixupS{} has better generalization error than Mixup and ERM, as well as the robustness of test set deformation.

\begin{figure*}[!t]\centering
\includegraphics[width=0.45\textwidth]{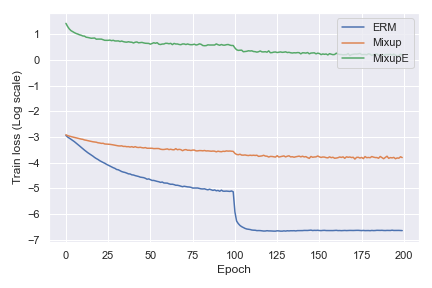}\label{fig:a}
\hfil
\includegraphics[width=0.45\textwidth]{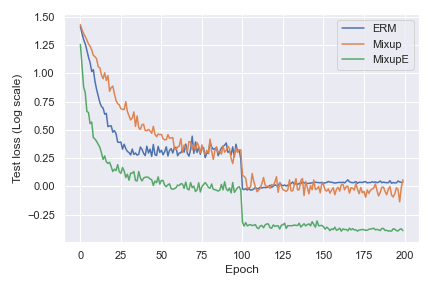}\label{fig:b}
\caption{Comparison of train and test loss of ERM, Mixup, and \MixupS{} trained with Wide-Resnet-28-10 \citep{wrn} architecture. We can see that \MixupS{} has a higher training loss but lower test loss than Mixup and ERM.}
\label{fig:loss}
\end{figure*}

\section{Methods}
In this section, we derive our method with a new mathematical understanding of Mixup.  
We begin in Section~\ref{sec:1} with the notation used to present our theory and method. Mixup is then shown to  implicitly regularize infinitely many directional derivatives of all orders in Section~\ref{sec:2}. This theoretical insight allows us to enhance the regularization effect on the directional derivatives. We also demonstrate that the proposed explicit regularization reduces the algorithmic complexity, thus improving the generalization.  In Section~\ref{sec:3}, we present an algorithm to strengthen the regularization effect of Mixup based on a theoretical derivation and justify the theoretical improvement over Mixup. 
% we conclude that we cannot replace Mixup with an explicit regularizer, but we can enhance the regularization effect on the directional derivatives.

% Algorithmically, our method is simple in that it only  requirers to add an extra term to the original loss while using Mixup. 

\subsection{Notation} \label{sec:1}
 % We denote by $\bz=(\bx, \by)$ the 
 We denote the input and output pair as 
 $\bx \in\cX\subseteq\bR^d$ and $\by \in\cY\subseteq\bR^C$, respectively. Let $f_\theta(\bx) \in \RR^C $ be the output of the logits (i.e., the last layer before the softmax or sigmoid) of the model parameterized by $\theta$. We use $\ell(\theta, (\bx, \by)) = h(f_\theta(\bx)) - \by\T f_\theta(\bx) $ to denote the loss function where
\begin{equation}
    h(f_\theta(\bx))=\begin{cases}
      \log \left(\sum_{j}\exp(f_\theta(\bx)_{(j)})\right) & \text{Softmax} \\
      \log \left(1+\exp \left(f_\theta(\bx)\right)\right) & \text{Sigmoid}
    \end{cases}     
\end{equation}
where $\exp(\cdot)$ is the exponential function applied to every element. 
 % We define $f_\theta(x)\in \RR^D$  to be the output of the logits (i.e., the last layer before the softmax or sigmoid) of the model parameterized by $\theta$. We use $l(\theta,z)$ to denote the loss: for example, $l(\theta,z)=-\log\frac{\exp(y\T f_\theta(x))}{\sum_{k}\exp(f_\theta(x)_k)}$ for the cross-entropy loss with  $y\in\{0, 1\}^D$ being a one-hot vector, and $l(\theta,z)=-\log\frac{\exp(yf_\theta(x))}{1+\exp(f_\theta(x))}$ for the logistic  loss with $y \in \{0,1\}$.
% Let $g$ be  the softmax function for the cross-entropy loss, and the sigmoid function for  the logistic  loss. 
Let $g(\cdot)$ be the activation function. We use $\bx_{(i)}$ to index $i$-th element of the vector $\bx$ and $\bx_j$ to represent $j$-th variable in a set. $\mathbf{J}^k$ denotes the $k$-th Jacobian operator.

%  We denote by $z=(x,y)$ the input and output pair where $x \in\cX\subseteq\bR^d$ and $y\in\cY\subseteq\bR^D$. We define $f_\theta(x)\in \RR^D$  to be the output of the logits (i.e., the last layer before the softmax or sigmoid) of the model parameterized by $\theta$. We use $l(\theta,z)$ to denote the loss: for example, $l(\theta,z)=-\log\frac{\exp(y\T f_\theta(x))}{\sum_{k}\exp(f_\theta(x)_k)}$ for the cross-entropy loss with  $y\in\{0, 1\}^D$ being a one-hot vector, and $l(\theta,z)=-\log\frac{\exp(yf_\theta(x))}{1+\exp(f_\theta(x))}$ for the logistic  loss with $y \in \{0,1\}$.
% Let $g$ be  the softmax function for the cross-entropy loss, and the sigmoid function for  the logistic  loss. 

\paragraph{Mixup} Given a training dataset $S=\{(\bx_i,\by_i)\}_{i=1}^n$ of size $n$ with $\bx_i \in\cX$ and $\by_i\in\cY$, we define the Mixup version of the input and output pair by 
% $\tz_{i,j}(\lambda)=(\tx_{i,j}(\lambda),\ty_{i,j}(\lambda))$ where
 $\tx_{i,j}(\lambda) = \lambda \bx_i + (1-\lambda) \bx_j$ and $\ty_{i,j}(\lambda) = \lambda \by_i + (1-\lambda) \by_j$ with the Mixup coefficient $\lambda \in [0,1]$. Then, we  denote  the standard empirical loss by 
 $\Ln(\theta,S) = \frac{1}{n}\sum_{i=1}^n l(\theta, (\bx_i, \by_i) )$ and the Mixup loss by 
$$     \Lnmix(\theta, S) := \frac{1}{n^2}\sum_{i,j=1}^n \mathop{\bE}_{\lambda \sim \Beta(\alpha,\beta)}l(\theta, \tx_{i,j}(\lambda), \ty_{i,j}(\lambda))
$$
 where $\Beta(\alpha,\beta)$ represents the beta distribution with its parameters $\alpha,\beta>0$. 
We define a mixture of beta distributions as $\Dcal_\lambda=\frac{\alpha}{\alpha+\beta}\Beta(\alpha+1,\beta)+\frac{\beta}{\alpha+\beta}\Beta(\beta+1,\alpha)$ where the coefficients are the drawing probabilities. Let $a_\lambda=1-\lambda$ and $[n]=\{1,\dots,n\}$.
% i.e.,  a sample with this distribution is drawn with probabilities $\frac{\alpha}{\alpha+\beta}$ and $\frac{\beta}{\alpha+\beta}$ from $\Beta(\alpha+1,\beta)$ and $\Beta(\beta+1,\alpha)$ respectively. 

% For any function $\varphi$ and  point $u \in \RR^d$ in its domain with $\varphi(u) \in \RR^C$, we denote
% the matrix containing its $j$-th order derivatives by 
% $\partial ^{j}\varphi(u) \in \RR^{C \times d^j}$ with each entry $\partial ^{j}\varphi(u)_{k,i}=\vect[T^{(k)}]_{i}$ where $T^{(k)}$ is the $j$-th order tensor   $T^{(k)}_{i_1i_2 \cdots i_j}=\frac{\partial^{j}}{\partial u_{i_1}\partial u_{i_2} \cdots \partial u_{i_j}} \varphi(u)_{k}$ and $\vect[T^{(k)}] \in \RR^{d^j}$ is the vectorization of the tensor. For example, $\partial ^{1}\varphi(u)$ is the Jacobian of $\varphi$ evaluated at $u$.  For an vector $a \in \RR^{d}$, we define $a^{\otimes j}=a \otimes a \otimes \cdots \otimes a \in \RR^{d^j}$ where $\otimes$ represents the Kronecker product. Finally, we let $\Dcal_X$ be  the empirical distribution of the given training dataset $(x_1,\cdots,x_n)$.

\subsection{Motivation from implicit regularization of MIXUP} \label{sec:2} %Motivation and Derivation via Theoretical Understanding
Here we show a theorem that optimizing the vanilla Mixup loss induces an implicit regularization on directional derivatives of the model $f_\theta$.

% The following theorem shows that Mixup induces the regularization on $j$-th order directional derivatives of the model $f_\theta$ in the direction of $\EE_{\bxp \sim \Dcal_X}[ (\bxp - \bx_{i})^{\otimes j})]$ for all orders $j \in \NN_+$:  
\begin{theorem} \label{thm:1}
Let $\ell(\theta, (\bx, \by)) \triangleq h(f_\theta(\bx)) - \by\T f_\theta(\bx)$ be the loss function and $\forall \theta \in \Theta$ functions $f_\theta(\cdot)$ in a $C^K$ manifold. Then the implicit regularization of Mixup is:
% Then, for any $J \in \NN_+$, if $f_{\theta}$ is $C^J$ 
% $J$-times differentiable at $x_i$ for $i \in [n]$,  there exists functions $\psi$ and $\psi_{i}$ such that $\lim_{a \rightarrow 0}\psi(a)=0$, $\lim_{a \rightarrow 0}\psi_{i}(a)=0$, and
% Let $\ell$ represent the cross-entropy loss or the logistic loss; i.e.,  $\ell(\theta,z)=-\log\frac{\exp(\by\T f_\theta(\bxp))}{\sum_{k}\exp(f_\theta(\bxp)_k)}$ or  $\ell(\theta, \bz)=-\log\frac{\exp(yf_\theta(x))}{1+\exp(f_\theta(x))}$. Define  $h$ such that $h(z)=\log\left(\sum_{k}\exp(z_k)\right)$ if $l$ represents the cross-entropy loss, and $h(z)=\log(1+\exp(z))$ otherwise. Then, for any $J \in \NN_+$, if $f_{\theta}$ is $J$-times differentiable at $x_i$ for $i \in [n]$,  there exists functions $\psi$ and $\psi_{i}$ such that $\lim_{a \rightarrow 0}\psi(a)=0$, $\lim_{a \rightarrow 0}\psi_{i}(a)=0$, and
% \begin{align*}
% \Lnmix(\theta,S) 
%  =\Ln(\theta,S)+ \frac{1}{n}\sum_{i=1}^n \EE_{\substack{\lambda\sim\Dcal_\lambda \\ \bxp\sim \Dcal_X }} \bigg(  \sum_{j=1}^{J} \frac{a_\lambda^{j}}{j!} \partial ^{j}h(f_\theta(  x_i ) ) \Delta_{i}^{\otimes j}-a_\lambda y_i\T \Delta_{i}+a_\lambda^{J} \psi(a_\lambda) \bigg),
% \end{align*}
\begin{equation}
\begin{aligned}
\Lnmix&(\theta,S) 
 =\Ln(\theta,S) \\ 
 &+ \frac{1}{n}\sum_{i=1}^n \EE_{\substack{\lambda\sim\Dcal_\lambda \\ \bx^\prime \sim \Dcal_X }} \bigg(  \sum_{k=1}^{K} \frac{a_\lambda^{k}}{k!} \mathbf{J}^k_{h}(f_\theta) \Delta_{i}^{\otimes k} \\ 
 &-a_\lambda \by_i\T \Delta_{i}+a_\lambda^{K} \hat \psi_{i, \bxp}(a_\lambda) \bigg)
\end{aligned} 
\end{equation}
where $\mathbf{J}^k_{h}(f_\theta) = g(f_\theta(  \bx_i ))\T$ and %\in \RR^{1\times C}$, and 
\begin{equation}
    \Delta_{i}=\sum_{k=1}^K \frac{a_\lambda^{k-1}}{k!} \mathbf{J}^k_{f_\theta}(\bx_i) (\bx^\prime - \bx_{i})^{\otimes k}+a_\lambda^{K-1} \psi_{  i, \bxp }(a_\lambda).
\end{equation}
\end{theorem}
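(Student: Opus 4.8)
The plan is to prove the identity in two stages: a distributional reparametrization that rewrites $\Lnmix$ as an average of per-sample losses carrying the \emph{unmixed} label $\by_i$, followed by a Taylor expansion of that per-sample loss. For the first stage, I would use that $\ell(\theta,(\bx,\by)) = h(f_\theta(\bx)) - \by\T f_\theta(\bx)$ is affine in $\by$: since $\lambda + (1-\lambda) = 1$, the mixed-label loss splits exactly as
\[
\ell(\theta, \tx_{i,j}(\lambda), \ty_{i,j}(\lambda)) = \lambda\, \ell(\theta, \tx_{i,j}(\lambda), \by_i) + (1-\lambda)\, \ell(\theta, \tx_{i,j}(\lambda), \by_j),
\]
which eliminates the mixed target in favor of the two endpoint labels.

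Next I would exploit the symmetry of the double sum $\frac1{n^2}\sum_{i,j}$. Relabeling $i\leftrightarrow j$ in the second term and substituting $\lambda\mapsto 1-\lambda$ preserves $\tx_{i,j}(\lambda)$ while turning $\Beta(\alpha,\beta)$ into $\Beta(\beta,\alpha)$, so both pieces collapse onto the single label $\by_i$ with a residual weight $\lambda$. That weight is absorbed through the Beta-moment identity $\EE_{\lambda\sim\Beta(\alpha,\beta)}[\lambda\,\phi(\lambda)] = \tfrac{\alpha}{\alpha+\beta}\EE_{\lambda\sim\Beta(\alpha+1,\beta)}[\phi(\lambda)]$ together with its $\alpha\leftrightarrow\beta$ analogue; these two contributions are exactly the components of the mixture $\Dcal_\lambda$. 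Interpreting the remaining average over $j$ as $\bx'\sim\Dcal_X$, I arrive at the checkpoint
\[
\Lnmix(\theta,S) = \frac1n\sum_{i=1}^n \EE_{\substack{\lambda\sim\Dcal_\lambda\\ \bx'\sim\Dcal_X}} \ell\big(\theta, \lambda\bx_i + (1-\lambda)\bx',\, \by_i\big).
\]

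With this reformulation I would set $a_\lambda = 1-\lambda$, so the mixed input is $\bx_i + a_\lambda(\bx'-\bx_i)$, and Taylor-expand using the $C^K$ hypothesis. Expanding $f_\theta$ about $\bx_i$ to order $K$ and dividing the increment $f_\theta(\tx) - f_\theta(\bx_i)$ by $a_\lambda$ reproduces $\Delta_i$ exactly, with the tail gathered into $a_\lambda^{K-1}\psi_{i,\bx'}(a_\lambda)$; by construction $f_\theta(\tx) = f_\theta(\bx_i) + a_\lambda\Delta_i$. Expanding $h$ about $f_\theta(\bx_i)$ with increment $a_\lambda\Delta_i$ then gives $h(f_\theta(\bx_i)) + \sum_{k=1}^K \frac{a_\lambda^k}{k!}\mathbf{J}^k_h\,\Delta_i^{\otimes k} + a_\lambda^K\hat\psi_{i,\bx'}(a_\lambda)$, where the first Jacobian of $h$ is the activation $g(f_\theta(\bx_i))\T$, while the linear label term contributes $\by_i\T f_\theta(\bx_i) + a_\lambda\by_i\T\Delta_i$. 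The two zeroth-order pieces combine into $h(f_\theta(\bx_i)) - \by_i\T f_\theta(\bx_i) = \ell(\theta,(\bx_i,\by_i))$, whose average over $i$ is $\Ln(\theta,S)$; everything left over is precisely the stated regularizer.

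The main obstacle will be the reparametrization: the collapse onto the single label $\by_i$ and the appearance of $\Dcal_\lambda$ depend on correctly pairing the $i\leftrightarrow j$ symmetrization with the change of variables $\lambda\mapsto 1-\lambda$ before invoking the Beta-moment identity, and any index or sign slip there would leave behind spurious label-difference terms. A secondary point needing care is the remainder bookkeeping: I must check that the $C^K$ assumption makes both Taylor expansions exact with remainders of the claimed orders $a_\lambda^{K-1}$ and $a_\lambda^{K}$, and that the remainder of the composition $h\circ f_\theta$ is consistently expressed through $\Delta_i$ rather than through the raw input increment.
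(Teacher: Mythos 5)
Your proposal is correct and follows essentially the same route as the paper: first reduce the double-sum Mixup loss to the single-label form $\frac{1}{n}\sum_i \EE_{\lambda\sim\Dcal_\lambda,\,\bx'\sim\Dcal_X}\,\ell(\theta,\bx_i+a_\lambda(\bx'-\bx_i),\by_i)$, then Taylor-expand $f_\theta$ about $\bx_i$ in $a_\lambda$ to obtain $f_\theta(\bx_i)+a_\lambda\Delta_i$ and expand $h$ about $f_\theta(\bx_i)$ with increment $a_\lambda\Delta_i$. The only difference is that you derive the reparametrization checkpoint yourself via label-affinity, $i\leftrightarrow j$ symmetrization, and the Beta-moment identity, whereas the paper imports that step directly from equation (9) of Zhang et al.\ (2021); your version is self-contained but otherwise identical.
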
 
\begin{remark} $\hat \psi_{i, \bxp}$ and $\psi_{i, \bxp}$ are the remainder terms in Taylor expansion of order $\mathcal{O}(K)$ and with probability $1$, $\lim_{a_\lambda \rightarrow 0}\hat \psi_{i, \bxp}(a_\lambda)=0$, $\lim_{a_\lambda \rightarrow 0}\psi_{i, \bxp}(a_\lambda)=0$.
For cross-entropy loss, given input $\bz \in \RR^d$, $h(\bz)=\log\left(\sum_{j}\exp(\bz_{(j)})\right)$, the derivative of each element is 
$
\frac{\partial h(\bz)}{\partial \bz_{(t)}} = [\sum_{j}\exp(\bz_{(j)})]^{-1} \exp(\bz_{(t)})=g(\bz)_{(t)},
$ 
Similarly, the logistic loss has same derivative form $\frac{\partial h(\bz)}{\partial \bz_{(i)}} = (1+\exp(\bz_{(i)}))^{-1} \exp(\bz_{(i)}) =g(\bz)_{(i)}$. 
Therefore, for both cases, we have the Jacobian w.r.t $\bx_i$ that
$
\mathbf{J}_{h \circ f_\theta} (  \bx_i ) = \mathbf{J}_{h}(f_\theta) \mathbf{J}_{f_\theta}(\bx_i) = g(f_\theta(  \bx_i ))\T\mathbf{J}_{f_\theta}(\bx_i)  \in \RR^{1\times d}.
$ 

% Finally, we compute $\partial h(f_\theta(  x_i ) )$. Recall that  
% $h(z)=\log\left(\sum_{k}\exp(z_k)\right)$ for the cross-entropy loss and $h(z)=\log(1+\exp(z))$ for the logistic. For the cross-entropy loss, we have
% $
% \frac{\partial h(z)}{\partial z_t} =\left(\sum_{k}\exp(z_k)\right)^{-1} \exp(z_t)=g(z)_{t},
% $ 
% where $g$ is the softmax function. Similarly,  for  the logistic  loss, since $h(z)=\log\left(1+\exp(z)\right)$, we have that $\frac{\partial h(z)}{\partial z} = (1+\exp(z))^{-1} \exp(z) =g(z)$, where $g$ is the sigmoid function.  Therefore, for both cases of  the cross-entropy loss and the logistic  loss, we have that 
% $
% \partial h(f_\theta(  x_i ) )=g(f_\theta(  x_i ))\T \in \RR^{1\times D}.
% $ 
\end{remark}

The proof of Theorem~\ref{thm:1} is given in Appendix \ref{app:theorm1}. Theorem~\ref{thm:1} provides the following novel insights: (1) Implicit regularization of Mixup is to add a series of directional derivatives with ascending orders to ERM. (2) To minimize the error brought by remainder terms, we need a large expansion order $K$, or even infinite. Obviously, in this case, explicitly computing the regularizer involves $\mathcal{O}(K^2)$ high-order derivative terms and thus suffers a heavy computational burden. Therefore, instead of replacing Mixup with all explicit regularizers, it is more advantageous to retain Mixup with an extra regularization as it provides a computationally efficient alternative. 
% Therefore, it is expected to be beneficial to keep Mixup instead of replacing it with the corresponding  regularizer.      
% While we can understand the implicit regularization of Mixup, we cannot replace it by explicitly computing the regularizer because it involves infinite terms. Instead, it is expected to follow the Mixup to propose an efficient closed-form regularization.

% Because it involves  infinitely many directional derivatives of all orders  $K \in \NN_+$. In  other words,  Mixup is an efficient closed-form solution to \emph{implicitly} regularize infinitely many directional derivatives of  all orders.  Therefore, it is expected to be beneficial to keep Mixup instead of replacing it with the corresponding  regularizer.      

In this view, Theorem~\ref{thm:1} provides a theoretical motivation to further improve Mixup by enhancing its regularization effect in terms of directional derivatives $D_{\theta, S}^1, ..., D_{\theta, S}^K$. For computational efficiency, we propose to strengthen the first-order term while letting Mixup implicitly take care of the higher-order terms.
In Theorem~\ref{thm:1}, the regularization effect of Mixup on the first-order directional derivatives ($k=1$) is captured by$
D_{\theta, S}^1 := \frac{1}{n}\EE_{\substack{\lambda\sim\Dcal_\lambda}} [a_\lambda]\sum_{i=1}^n q(\bx_i),
$
where 
\begin{align} \label{eq:1}
q(\bx_i)=(g(f_\theta(  \bx_i ))-\by_i   )\T \mathbf{J}_{f_\theta}(\bx_i )(\EE[\bxp]- \bx_{i})  %(\underset{\bxp\sim \Dcal_X}{\EE}[\bxp ]- \bx_{i})
\end{align}
and $\mathbf{J}_{f_\theta} (\bx_i )\in \RR^{C \times d}$. Since $0< a_\lambda < 1$ and small enough, the first-order derivative $D_{\theta, S}^1$ dominates the rest terms $D_{\theta, S}^k$ ($k>1$). Therefore, it turns out to be a stitch in time saves nine if we only use $D_{\theta, S}^1$ as the regularization term. Unfortunately, computing Jacobian in deep models at each iteration step is still time-consuming. 
Furthermore, $q(\bx_i)$ can be approximated by 
\begin{align} \label{eq:2}
\hat{q}(\bx_i)=(\by_i-g(f_\theta(\bx_i)))\T f_\theta(\bx_i),
\end{align}
which lessens the computational burden by removing the derivatives of $f_\theta$. This approximation $q(\bx_i)= \hat{q}(\bx_i)$ holds true when $\EE_{\substack{\bxp\sim \Dcal_X}}[\bxp ]=0$ and $\mathbf{J}_{f_\theta}(\bx_i )\bx_i=f_\theta(\bx_i)$. To this end, we can normalize the training dataset with zero means to realize the first condition. The second condition can be guaranteed from the linear model, i.e. deep neural networks with ReLU activation. Thus we have the approximation that $q(\bx_i) \approx \hat q(x_i)$ and $\mathbf{J}_{f_\theta} (\bx_i )\bx_i\approx f_\theta(\bx_i)$.

% \textcolor{red}{This is because $q(\bx_i)= \hat{q}(\bx_i)$  if $\EE_{\substack{\bxp\sim \Dcal_X}}[\bxp ]=0$ (which can be ensured via the normalization of the training dataset) and $\partial f_\theta(\bx_i )\bx_i=f_\theta(\bx_i)$ (which can be ensured by using deep neural networks with ReLU activations). Thus, we have $q(\bx_i) \approx\bar q(x_i)$  if $\EE_{\substack{\bxp\sim \Dcal_X}}[\bxp ]\approx0$ and $\partial f_\theta(\bx_i )\bx_i\approx f_\theta(\bx_i)$, which is the case in many common  problems. }

However, there is an issue of negativity in the first-order regularization term $D^1_{\theta, S}$ of Mixup. Let $\alpha_{k,i}=(g(f(  \bx_i )_{(k)})-\by_i) \zeta_{k,i}$, (\ref{eq:1}) can be rewritten as
\begin{align}\label{eq:3}
q(\bx_i)=\sum_{j=1}^C \alpha_{j,i} \|\mathbf{J}_{f_\theta} (\bx_i)_{(j)}\|_2 \|\EE_{\substack{\bxp\sim \Dcal_X}}[\bxp ]- \bx_{i}\|_2 
\end{align}
where $f(\bx_i)_{(j)}$ is the $j$-th coordinate of $f(\bx_i)$ and coefficient $\zeta_{j,i}$ is the cosine similarity between $j$-th row vector $\mathbf{J}_{f_\theta} (\bx_i)_{(j)}$ and $\EE_{\substack{\bxp\sim \Dcal_X}}[\bxp ]- \bx_{i}$. If $\alpha_{j,i}$ is positive, then Mixup tends to minimize all first-order directional derivatives  $\|\mathbf{J}_{f_\theta} (\bx_i)_{(j)}\|_2, j \in [C]$. However, if $\alpha_{j,i}$ is negative, Mixup
has an unintended effect of maximizing $\|\mathbf{J}_{f_\theta} (\bx_i)_{(j)}\|_2$. Figure \ref{fig: min_alpha} shows that the minimum values of $\alpha$ are negative for some sample $\bx_i$ and coordinate $k$ in the initial phase of Mixup training. We show these values for both Preactresnet18 and Preactresnet50 architectures on the CIFAR-10 dataset.

% However, there is an issue of the implicit regularization on the first-order derivative via Mixup. From the definition of $q$ above, we can rewrite it as 
% \begin{align}
% q(\bx_i)=\sum_{k=1}^C \alpha_{k,i} \|\partial f_k(\bx_i)\|_2 \|\EE_{\substack{\bxp\sim \Dcal_X}}[\bxp ]- \bx_{i}\|_2
% \end{align}
% where $f_k(x_i)$ is the $k$-th coordinate of $f(x_i)$, , and $\zeta_{k,i}$ is the cosine similarity of two vectors $\partial f_k(x_i)$ and $\EE_{\substack{\bxp\sim \Dcal_X}}[\bxp ]- x_{i}$. Here, if $\alpha_k$ is positive, then Mixup tends to minimize the first-order derivative $\|\partial f_k(x_i)\|_2$ as desired. However, if $\alpha_k$ is negative, Mixup has an unintended effect of maximizing $\|\partial f_k(x_i)\|_2$. Figure \ref{fig: min_alpha} shows that the values of $\alpha_k$ are negative for some sample $x_i$ and coordinate $k$ in the initial phase of Mixup training. We show these values for both Preactresnet18 and Preactresnet50 architectures on CIFAR10 dataset.

%\textcolor{red}{TODO: Vikas \& Wai Hoh TODO: Plot $\min_{k,i} \alpha_{k,i}$ over epoch}

\begin{figure}[!ht]
  \centering
    \includegraphics[width=0.9\linewidth]{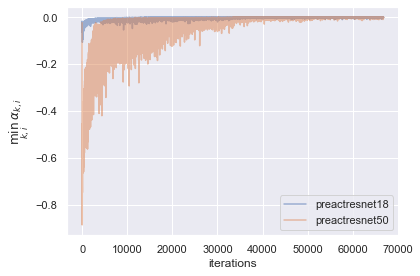}
  \caption{ Minimum value of $\alpha$ over the coordinate $k$ and sample $i$ for different iterations during the training.}
  \label{fig: min_alpha}
\end{figure}

\begin{algorithm}[!t]
\caption{Algorithm of \MixupS{}}
\begin{algorithmic}[1]
\Require Model $f_\theta: \mathbb{R}^{d} \to \mathbb{R}^C$. Hyperparameters $\alpha, \beta$ for beta distribution (mean $\overline{\lambda}$). Loss function $\ell$.
% \State Initialize a population of particles with random values positions
%        and velocities from \textit{D} dimensions in the search space
\While{Training epochs $< N$}
\For{Batch of data $X \in \mathbb{R}^{m \times d}, Y \in \mathbb{R}^{m \times C}$ in training set of size $m$}
    \State Sample $\lambda \sim \text{Beta}(\alpha, \beta)$
    \State Mixup data with $\tilde{X}, \tilde{Y} \gets \lambda (X, Y) + a_{\lambda}\text{Permute}(X, Y)$
    \State Mixup Loss $L^{mix}_{n}(\theta, X) = \ell(f_\theta(\tilde{X}), \tilde{Y})$
    \State Compute first-order directional derivatives that $\hat{q}(X) = f_\theta(X) \otimes (Y - \text{Softmax}(f_\theta(X)))$ %that $B = Y - \text{Softmax}(f(X))$
    \State Get additional loss $R(\theta, X)$ via (\ref{eq:additional_term}) %$$L_{add} =  \frac{(1-\overline{\lambda})}{m} \sum_{i=1}^m \left|\sum_{j=1}^C A_{ij} \right|$$
    \State $\mathcal{L} \gets \hat{\eta} \left(L^{mix}_n(\theta, S)+ \eta R(\theta, S) \right)$ via (\ref{eq:joint_loss})
    \State Optimize parameters $\theta$ with loss $\mathcal{L}$
\EndFor
\EndWhile
\end{algorithmic}
\label{alg:1}
\end{algorithm}

\subsection{Proposed MixupE} \label{sec:3} %Proposed Algorithm

% \begin{figure*}[!ht]
%   \centering
%     \includegraphics[width=\linewidth]{../}
%   \caption{ One epoch \MixupS{} training in PyTorch \arno{Would it actually be better to replace this with a pseudocode block?}}
%   \label{fig: code_block}
% \end{figure*}

% We propose an explicit regularization for vanilla Mixup to strengthen the implicit effect, namely \MixupS{}, whose optimization objective is defined as:
% \begin{align}
%     L^{mix}_{n} (\theta)
% \end{align}
To avoid the unintended effect of maximizing $\|\mathbf{J}_{f_\theta} (\bx_i)_{(j)}\|_2$ in the initial phase of Mixup training, the proposed method uses the following alternative form to ensure the positivity and strengthen the vanilla Mixup: 
% To avoid the unintended effect of maximizing $\|\partial f_k(x_i)\|_2$ in the initial phase of Mixup training, the proposed method minimizes to guarantee the non-negativity
% adds the following term to the original loss while using Mixup:
\begin{align}\label{eq:additional_term}
 R(\theta, S)=  \frac{\EE_{\substack{\lambda\sim\Dcal_\lambda}} [a_\lambda]}{n} \sum_{i=1}^{n} |\tilde q(\bx_i)|,
\end{align}
where $\tilde q=q$ for the accurate version and $\tilde q = \hat q$ for the approximate version. The functions $q$ and $\hat q$ are defined in equations \eqref{eq:1} and \eqref{eq:2}. The approximate version does not require computation of the directional derivatives, and the additional computational cost is negligible because $f_\theta(\bx_i)$ is known for the original loss and the value of $\EE_{\substack{\lambda\sim\Dcal_\lambda}} [a_\lambda]$ is a fixed number over epochs: e.g., $\EE_{\substack{\lambda\sim\Dcal_\lambda}} [a_\lambda]=1-\frac{\alpha+1}{2\alpha+1}$ when $\Beta(\alpha, \alpha)$ is used for Mixup. 
To justify the rationality of our proposed regularization, we consider a degenerate case. Given a mixup distribution $\Beta(\alpha, \beta)$, if $\alpha, \beta \rightarrow 0$, we have that $R(\theta, S) \rightarrow 0$ as $\EE_{\substack{\lambda\sim\Dcal_\lambda}} [a_\lambda] \to 0$. In this case, Mixup behaves closely to ERM (i.e., mixup coefficient $\lambda$ goes to $0$). Then additional term $R(\theta, S)$ vanishes and is consistent with the behavior of vanilla Mixup. However, introducing $R(\theta, S)$ brings the artifacts when computing the training loss. In line with the original mixup training, we rescale the loss magnitude as before. Overall, we 
propose an explicit regularization $R(\theta, S)$ for vanilla Mixup to strengthen the implicit effect, namely \MixupS{}, where the total loss is defined as: 
% In other words, as Mixup gets closer to ERM without Mixup (i.e., $\alpha,\beta \rightarrow 0$),  the additional term $R(\theta, S)$ vanishes.  
% In addition, we  scale  the magnitude of the entire loss back to that of the original loss    so that the improvement is not an artifact of the additional magnitude of the loss. That is, by letting $\Lcal(\theta)$ be the original loss, our method uses the following loss: 
\begin{align}\label{eq:joint_loss}
\Lcal(\theta, S) &:= \hat{\eta} \left(L^{mix}_n(\theta, S)+ \eta R(\theta, S) \right), \\
\hat{\eta} &= \frac{|L^{mix}_n(\theta, S)|}{|L^{mix}_n(\theta, S)+ \eta R(\theta, S)|}, 
\end{align}         
where $\hat{\eta}$ is a scaling factor that depends on the magnitudes of $L^{mix}_n(\theta, S)$ and $R(\theta, S)$. Note that $\eta >0$ is the only hyperparameter of the proposed method.

The whole algorithm of \MixupS{} is shown in Algorithm ~\ref{alg:1}. Extending Mixup to \MixupS{} requires one additional forward pass on the \textit{original} (non-mixed) sample for computing the additional loss term and requires one additional hyperparameter $\eta$ in comparison to Mixup. We note that Algorithm ~\ref{alg:1} shows how to apply \MixupS{} when the training sample is in the form of a fixed-shape tensor (for example, images or tabular data). When applying \MixupS{} to training samples with irregular (not fixed) topology, such as graphs, sequences, and trees, we first need to project the input samples to a fixed shape hidden states using an encoder network. After this projection, \MixupS{} can be applied in the usual form.

\paragraph{Theoretical justification: } To validate the rationality of the \MixupS{}, we justify the generalization improvement over Mixup.
Regarding the vanilla Mixup as an original unconstrained problem, for some $\gamma > 0 $, then the constraint $\Theta$ in the dual problem of \MixupS{} will be 
\begin{equation}
    \Theta = \{\bx \to f_\theta(\bx) | \sup_{\bx} | \hat{q}(\bx)| \leq \gamma \}.
\end{equation}
 % Let
% /To compare it to the original Mixup, we begin with   
Following \cite{zhang2021does}, we consider Generalized Linear Model (GLM) $h(f_\theta(\bx)) = A(\theta\T \bx)$ for \MixupS{}. 
For simplicity, let $y \in \{0, 1\}$ and data is bounded $\| \bx\|^2 \leq \mathcal{X}$. 
And we define the expected risk of \MixupS{} as:
\begin{equation}
    \tilde{\mathcal{L}}(\theta) := \EE_{S}  \mathcal{L}(\theta, S) 
\end{equation}
 % For simplicity, we assume $\by$ can be represented by $(\theta^*)^\top \bx$. Thus, there exists a diagonal matrix $\Sigma$ of size $d$ such that $\theta^* = \Sigma \theta$.
Then we proved the following generalization gap:
% \begin{theorem}
% Assume that the distribution of $\bx_i$ is $\rho$-retentive (), and let $\Sigma_X=\mathbb{E}\left[\bx \bx^{\top}\right]$. Then the empirical Rademacher complexity of $\Theta$ satisfies
% \begin{equation}
%     \mathcal{R}_n(\Theta) \leq \frac{\gamma}{\rho} \sqrt{\frac{\operatorname{rank}\left(\Sigma_X\right)}{n}}
% \end{equation}
% \end{theorem}
\begin{theorem}\label{thm:2} Suppose $A(\cdot)$ is $L_A$-Lipchitz continuous, $\mathcal{X}, \mathcal{Y}$ and $\Theta$ are all bounded, the constraint of GLM satisfies $\sup_{\bx} | \hat{q}(\bx)| \leq \gamma$
% the label of GLM is $\by = (\Sigma \theta)^\top \bx$,
then there exist constant $B>0$, such that for all $\theta \in \Theta$,
% satisfying $\sup_\bx \hat{q}(\bx) \leqslant \gamma$
 we have
\begin{equation}
\begin{aligned}
        \tilde{\mathcal{L}}(\theta) 
    & \leq \hat{\eta} L^{mix}_n(\theta, S) +  \frac{ 2 \hat{\eta} \eta L_A \gamma\mathcal{X} }{\sqrt{n} (1+L_A)} \\
    & +  B \sqrt{\frac{\log(1/\delta)}{2n}}
\end{aligned}
\label{eq:complexity}
\end{equation}
% $$\mathcal{L}(\theta) \leq L^{mix}_n(\theta, S) + \frac{2LL_A \gamma}{\rho} \sqrt{\frac{\operatorname{rank}\left(\Sigma_X\right)}{n}} + B \sqrt{\frac{\log(1/\delta)}{2n}}$$
% and $\overline{\sigma} =   \sup_{\Sigma} \frac{\operatorname{tr}(\Sigma)}{d}$ 
with probability at least $1-\delta$.
\end{theorem}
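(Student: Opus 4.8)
The plan is to establish a uniform-convergence generalization bound for the constrained \MixupS{} objective, adapting the Generalized Linear Model analysis of Mixup from \citet{zhang2021does}. The right-hand side has the canonical shape of a Rademacher-complexity bound: an empirical loss term $\hat{\eta}L^{mix}_n(\theta,S)$, a complexity term decaying like $1/\sqrt{n}$, and a concentration term $B\sqrt{\log(1/\delta)/(2n)}$. So I would target each of these three pieces in turn.

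First I would set up the concentration term. Since $\mathcal{X}$, $\mathcal{Y}$, and $\Theta$ are all bounded and $A$ is $L_A$-Lipschitz, the per-example \MixupS{} loss is uniformly bounded by some constant over $\theta \in \Theta$. Writing the expected objective $\tilde{\mathcal{L}}(\theta)$ minus its empirical counterpart as a function of the $n$ i.i.d.\ samples with bounded coordinate-wise differences, McDiarmid's bounded-differences inequality yields, with probability at least $1-\delta$, the term $B\sqrt{\log(1/\delta)/(2n)}$, where $B$ collects the bounded-difference constant.

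Next I would control the expected deviation by the standard symmetrization argument, upper-bounding it by twice the Rademacher complexity of the \MixupS{} loss class restricted to the constraint set $\Theta = \{f_\theta : \sup_\bx|\hat{q}(\bx)| \leq \gamma\}$. This is where the regularizer earns its keep: the penalty $R$ enforces exactly the constraint $\sup_\bx|\hat{q}(\bx)| \leq \gamma$, and I would show that, through the GLM identity $\hat{q}(\bx) = (\by - g(f_\theta(\bx)))^\top f_\theta(\bx)$ with $f_\theta(\bx) = \theta^\top\bx$, this functional constraint translates into an effective norm bound on the linear predictor. The Rademacher complexity of bounded-norm linear functions over inputs with $\|\bx\|^2 \leq \mathcal{X}$ contributes the factor $\mathcal{X}/\sqrt{n}$, and peeling off the $L_A$-Lipschitz map $A$ (and the $1$-Lipschitz absolute value in $R$) via Talagrand's contraction lemma produces the $L_A$ factor. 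Carrying the scaling factor $\hat{\eta}$, the regularization weight $\eta$, and the factor $\tfrac{1}{1+L_A}$ arising from how $\gamma$ bounds the predictor norm through the product structure of $\hat{q}$ should reproduce the complexity term $\frac{2\hat{\eta}\eta L_A\gamma\mathcal{X}}{\sqrt{n}(1+L_A)}$. Combining the three pieces then gives the claimed inequality.

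I expect the Rademacher step to be the main obstacle. The difficulty is twofold: converting the functional constraint $\sup_\bx|\hat{q}(\bx)| \leq \gamma$ into a clean norm bound that yields the precise constant $\tfrac{1}{1+L_A}$ out of the product $\hat{q}(\bx) = (\by - g(f_\theta(\bx)))^\top f_\theta(\bx)$, and correctly handling the data-dependent scaling $\hat{\eta}$, which couples $L^{mix}_n$ and $R$ and therefore affects both the bounded-difference constant and the contraction argument. A careful treatment will likely require bounding $\hat{\eta}$ uniformly over $\theta$ and $S$ and absorbing its residual variation into the constant $B$.
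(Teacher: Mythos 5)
Your plan follows essentially the same route as the paper's proof: the Bartlett--Mendelson Rademacher bound (concentration plus symmetrization plus Lipschitz contraction through $A$), combined with converting the constraint $\sup_{\bx}|\hat{q}(\bx)|\leq\gamma$ into a norm bound $\|\Sigma_X^{1/2}\theta\|\leq\gamma/(1+L_A)$ on the linear predictor, whose Rademacher complexity over inputs with $\|\bx\|^2\leq\mathcal{X}$ yields the $\gamma\mathcal{X}/((1+L_A)\sqrt{n})$ term. The difficulty you flag about tracking $\hat{\eta}$ and $\eta$ is genuine, but the paper's own proof does not resolve it either---it simply inserts $\hat{\eta}$ in front of $L^{mix}_n$ in the final display and drops the $\hat{\eta}\eta$ factors from the complexity term---so on that point your sketch is no less careful than the original.
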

See proof in Appendix \ref{app:theorem2}. In the context of Rademacher complexity, the smaller size of function classes will guarantee better generalization, which is reflected by the \emph{complexity} exemplified as the second term in (\ref{eq:complexity}). 
It is worth noting that for vanilla Mixup, a general parameter constraint can be set, such as $\hat{\Theta}: \|\theta\|^2\leq \xi$. In this case, the \emph{complexity} term can be as high as $L_A \sqrt{\xi \cdot \text{MaxNorm}(\bx) /n}$, depending on the Lipschitz constant $L_A$ while \MixupS{} does not. In general, all the inputs fall in a ball with radius $\mathcal{X}$, which can be normalized as $\mathcal{X} \leq 1$. Therefore, the $\mathcal{X}$ will not lead to a loose bound compared to the standard constraint $\hat{\Theta}$.
% The regularization of \MixupS{} enforces a constraint on $\hat{q}(\bx)$, which constrains the smoothness of the activation function of the given input. In general, all the inputs fall in a ball with radius $\mathcal{X}$, which can be normalized as $\mathcal{X} \leq 1$. 
% depending on the maximum norm of the data. This maximum norm can often be much larger than the second term in the abovementioned inequality.
% The regularization of \MixupS{} enforces a constraint on $\hat{q}(\bx)$, which involves the product of the "calibration" error and the representations. On the one hand, $\overline{\sigma}$ represents the average value of the maximum correction matrix to the ground truth, which can also be large. Consequently, as $\overline{\sigma} \to \infty$, the second term tends to shrink towards zero. On the other hand, the norm of the representation scales with the norm of the input in the GLM. Therefore, $\hat{q}(\bx)$ also constrains the norm of the data, denoted as $\text{Norm}(\bx)$.
In summary, these observations indicate that the regularized version of Mixup, \MixupS{}, leads to better generalization due to the stronger constraint imposed on the parameter space. This validates the rationale behind \MixupS{} and highlights its efficacy in promoting improved generalization performance. Further details and a comprehensive discussion of these findings are in Appendix \ref{app:comparison}.
% In summary, it tells us that the regularized version of Mixup leads to better generalization due to the stronger constraint on parameter space and validates the rationality of \MixupS{} (we discuss the detail in Appendix C). 

\section{Related Work}\label{sec:related_work}
%\textcolor{red}{[Vikas]}
%Method related work... and if Our new method can be applied to any of these methods.

%Mixup-based methods allow inducing inductive biases about how a model's predictions should behave in-between two or more data samples. 
%
Mixup \citep{zhang2017mixup,bclearning} and its numerous variants have seen remarkable success in supervised learning problems, as well as other problems such as semi-supervised learning \citep{ict, mixmatch}, unsupervised learning using autoencoders \citep{amr, acai}, adversarial learning \citep{iat,Lee2020AdversarialVM,Pang*2020Mixup}, graph-based learning \citep{verma2020graphmix}, computer vision \citep{yun2019cutmix, jeong2020interpolationbased, panfilov2019improving, faramarzi2020patchup}, natural language \citep{guo2019augmenting,zhang2020seqmix} and speech \citep{9054719,Tomashenko}.

Mixup \citep{zhang2017mixup} creates synthetic training samples by linear interpolation in the input vectors and their corresponding labels. The follow-up work of Mixup can be categorized into two main categories: (a) methods that propose a \textit{non-linear} interpolation in the input vector (or hidden vectors): examples of this category include  \citet{yun2019cutmix,faramarzi2020patchup, kim2020puzzle, zhang2020seqmix, verma2021towards}. (b) methods that extend linear interpolation based Mixup training to various learning paradigms or applications: examples of this class include Mixup based training for supervised learning \citep{manifold_mixup}, semi-supervised learning \citep{ict, mixmatch}, for adversarial training \citep{iat}, for node classification \citep{verma2020graphmix}, and for natural language processing \citep{guo2019augmenting}. The method proposed in this work can be applied to any of the methods in the latter category, and we leave experimental evaluation of \MixupS{} applied to these methods for future work.

%Mixup\citep{zhang2017mixup,bclearning} and its numerous variants\citep{manifold_mixup,yun2019cutmix,faramarzi2020patchup} have seen remarkable success in supervised learning problems, as well other problems such as semi-supervised learning \citep{ict, mixmatch}, unsupervised learning using autoencoders \citep{amr, acai}, adversarial learning \citep{iat,Lee2020AdversarialVM,Pang*2020Mixup}, graph-based learning \citep{verma2020graphmix}, computer vision \citep{yun2019cutmix, jeong2020interpolationbased, panfilov2019improving}, natural language \citep{guo2019augmenting,zhang2020seqmix} and speech \citep{9054719,Tomashenko}.

% \textcolor{red}{[Kenji]} Theoretical related work, and how our work differs from them and how it is similar

 Theoretically, Mixup has been analyzed by \citet{zhang2021does}, in which the authors show that it is approximately equivalent to adding a second-order regularizer to improve robustness and generalization. However, they did not propose a method based on the theory. In contrast, this paper shows that it is  equivalent to adding infinite regularizations on the directional derivatives of all orders and uses this theory to propose a new method. %\citept{verma2021towards} proposed to use Mixup for  domain agnostic contrastive learning, and 

\section{Experiments}
We present a range of experiments to back up the methodological claims, demonstrate versatility across benchmark problems, and show practical applicability on images, tabular data, and speech problems.

\subsection{Image Datasets}
\label{sec:image_dataset}

\begin{table*}[!ht]
           \centering
           %\captionsetup[subtable]{position = top}
           %\captionsetup[table]{position=top}
           \caption{Classification errors on (a) CIFAR-10 and (b) CIFAR-100.
                    %We note significant improvement with \manifoldmixup{} especially in terms of Negative log-likelihood (NLL).
                    Standard deviations over five repetitions.
                    Best performing methods in {\bf bold}.
                    }
           \label{tab:supervised}
           % \begin{subfigure}{\columnwidth}
           \begin{subtable}{\columnwidth}
               \caption{CIFAR-10}
               \label{tab:cifar10}    
               \setlength{\tabcolsep}{22pt}
               %\resizebox{\linewidth}{!}{
               \begin{tabular}{lrr} %\hline
               \toprule

               PreActResNet50 & \shortstack[l]{Test Error (\%)}  \\
               \midrule
               ERM & \nolight{\g{4.71}{0.062}} \\ %\hline
               \shortstack[l]{\inputmixup{}} & \nolight{\g{4.53}{0.041}} \\ %\hline
               \shortstack[l]{\MixupS{}} & \g{\textbf{3.53}}{0.047} \\  %\hline % eta 0.01
               \midrule
               PreActResNet101 \\
               \midrule
               ERM & \nolight{\g{4.21}{0.069}} \\ %\hline
               \shortstack[l]{\inputmixup{}} & \nolight{\g{4.43}{0.049}} \\ %\hline
               \shortstack[l]{\MixupS{}} & \g{\textbf{3.35}}{0.049} \\  %\hline eta 0.01
               \midrule
               Wide-Resnet-28-10 \\
               \midrule
               ERM & \nolight{\g{4.24}{0.101}} \\ %\hline
               \shortstack[l]{\inputmixup{} } & \nolight{\g{3.03}{0.091}} \\ %\hline
               \shortstack[l]{\MixupS{}} & \g{\textbf{2.94}}{0.048} \\  %\hline eta 0.001
               %\iffalse
               \bottomrule
               \end{tabular}
               %}
           % \end{subfigure}%
           \end{subtable}%
           %\hspace*{4em}
           \hfill
           % \begin{subfigure}{\columnwidth}
           \begin{subtable}{\columnwidth}
               \caption{CIFAR-100}
               \label{tab:cifar100}   
               \setlength{\tabcolsep}{22pt}
               %\resizebox{\linewidth}{!}{
               \begin{tabular}{lrr} %\hline
               \toprule

               PreActResNet50 & \shortstack[l]{Test Error (\%)}  \\
               \midrule
               ERM & \nolight{\g{24.68}{0.349}} \\ %\hline
               \shortstack[l]{\inputmixup{}} & \nolight{\g{23.03}{0.471}} \\ %\hline
               \shortstack[l]{\MixupS{}} & \g{\textbf{20.23}}{0.507} \\  %\hline eta = 0.001
               \midrule
               PreActResNet101 \\
               \midrule
               ERM & \nolight{\g{23.20}{0.362}} \\ %\hline
               \shortstack[l]{\inputmixup{}} & \nolight{\g{23.05}{0.383}} \\ %\hline
               \shortstack[l]{\MixupS{}} & \g{\textbf{18.86}}{0.376} \\  %\hline eta 0.01
               \midrule
               Wide-Resnet-28-10 \\
               \midrule
               ERM & \nolight{\g{22.20}{0.108}} \\ %\hline
               \shortstack[l]{\inputmixup{} } & \nolight{\g{19.38}{0.113}} \\ %\hline
               \shortstack[l]{\MixupS{}} & \g{\textbf{17.12}}{0.111}\\  %\hline eta 0.001
               %\iffalse
               \bottomrule
               \end{tabular}
               %}
           % \end{subfigure}%        
           \end{subtable}%        
\end{table*}

{%\renewcommand{\arraystretch}{0.8}
\begin{table*}[!ht]
           \centering
           %\captionsetup[subtable]{position = top}
           %\captionsetup[table]{position=top}
           \caption{Classification error on SVHN and classification accuracy on Tiny-Imagenet.
                    %We note significant improvement with \manifoldmixup{} especially in terms of Negative log-likelihood (NLL).
                    Standard deviations over five repetitions.
                    Best performing methods in {\bf bold}.
                    }
           \label{tab:supervised_svhn_tiny_imagenet}
           % \begin{subfigure}{\columnwidth}
           \begin{subtable}{\columnwidth}
               \caption{Classification Error on SVHN}
               \label{tab:svhn}
               \setlength{\tabcolsep}{22pt}
               %\resizebox{\linewidth}{!}{
               \begin{tabular}{lrr} %\hline
               \toprule

               PreActResNet50 & \shortstack[l]{Test Error (\%)}  \\
               \midrule
               ERM & \nolight{\g{2.80}{0.201}} \\ %\hline
               \shortstack[l]{\inputmixup{}} & \nolight{\g{2.65}{0.017}} \\ %\hline
               \shortstack[l]{\MixupS{}} & \g{\textbf{2.42}}{0.021} \\  %\hline % eta 0.01
               \midrule
               PreActResNet101 \\
               \midrule
               ERM & \nolight{\g{2.95}{0.019}} \\ %\hline
               \shortstack[l]{\inputmixup{}} & \nolight{\g{2.79}{0.015}} \\ %\hline
               \shortstack[l]{\MixupS{}} & \g{\textbf{2.35}}{0.019} \\  %\hline eta 0.01
               \midrule
               Wide-Resnet-28-10 \\
               \midrule
               ERM & \nolight{\g{2.82}{0.049}} \\ %\hline
               \shortstack[l]{\inputmixup{} } & \nolight{\g{2.48}{0.117}} \\ %\hline
               \shortstack[l]{\MixupS{}} & \g{\textbf{2.29}}{0.168} \\  %\hline eta 0.001
               %\iffalse
               \bottomrule
               \end{tabular}
               %}
           % \end{subfigure}%
           \end{subtable}%
           \hfill
           % \begin{subfigure}{\columnwidth}
           \begin{subtable}{\columnwidth}
               \caption{Classification Accuracy on Tiny-ImageNet}
               \label{tab:tinyimagenet}   
               \setlength{\tabcolsep}{14pt}
               %\resizebox{\linewidth}{!}{
               \begin{tabular}{lrr} %\hline
                \toprule
                %Model &\shortstack[l]{Top-1} &\shortstack[l]{Top-5}\\ %\hline
                %\midrule
                PreActResNet18 & top-1 & top-5\\
                \midrule
                ERM & \g{54.97}{0.52} & \g{72.71}{0.48} \\ %\hline
                Mixup  & \g{54.64}{0.43} & \g{72.53}{0.51} \\
                \MixupS{}  & \g{\textbf{62.21}}{0.39} & \g{\textbf{82.09}}{0.41} \\
                \midrule
                 PreActResNet34 &  & \\
                \midrule
                ERM & \g{57.25}{0.48} & \g{72.58}{0.53} \\ %\hline
                Mixup  & \g{57.79}{0.39} & \g{76.15}{0.42} \\
                \MixupS{}  & \g{\textbf{65.37}}{0.31} & \g{\textbf{83.77}}{0.35} \\
                \midrule
                 PreActResNet50 &  & \\
                \midrule
                ERM & \g{55.91}{0.61} & \g{73.50}{0.57} \\ %\hline
                Mixup  & \g{54.86}{0.46} & \g{73.11}{0.43} \\
                \MixupS{}  & \g{\textbf{67.22}}{0.38} & \g{\textbf{85.14}}{0.36} \\
                \bottomrule
                \end{tabular}
               %}
           % \end{subfigure}%
           \end{subtable}%
\end{table*}

\textbf{For small-scale image datasets}, we consider the CIFAR-10, CIFAR-100, SVHN, and Tiny-ImageNet. We run our experiments using a variety of architectures, including PreActResNet18, PreActResNet34, PreActResNet50, PreActResNet101 \citep{he2016deep}, and Wide-Resnet-28-10 \citep{wrn}.

Throughout our experiments, we use SGD+Momentum optimizer with batch-size 100, learning rate 0.1, momentum 0.9 and  weight-decay  $10^{-4}$, with step-wise learning rate decay. We train all the networks for all the datasets for 200 epochs, and the learning rate is  annealed by a factor of 10 at epochs 100 and 150.

\textbf{Hyperparameters $\alpha$ and $\eta$:} For Mixup on the CIFAR-10 and CIFAR-100 datasets, we used the value $\alpha=1.0$ as recommended by \citet{zhang2017mixup}. For Mixup on SVHN and Tiny-ImageNet datasets, we experimented with the $\alpha$ values $1.0$ and $0.2$, respectively, as recommended by \citet{manifold_mixup}. 
%\MixupS{} introduces an additional hyperparameter $\eta$ in the training process. 
We experimented with the $\eta \in \{0.0001, 0.001, 0.01, 0.1\}$ and obtained the best results using $\eta = 0.001$ for most of the experiments and using $\eta = 0.0001$ for the remaining experiments. 
%The difference in test error for different values of $\eta$ is nominal, for example, for  suggesting that our method is not very sens

\textbf{Results:} We show results for the CIFAR-10 (Table~\ref{tab:cifar10}), CIFAR-100 (Table~\ref{tab:cifar100}), SVHN (Table~\ref{tab:svhn}), and Tiny-ImageNet (Table~\ref{tab:tinyimagenet}) datasets. We see that \MixupS{} consistently outperforms baseline methods ERM and Mixup across all the datasets and architectures.

%{\renewcommand{\arraystretch}{0.8}
\textbf{Sensitivity to hyperparameter $\eta$}: To validate that the method is not overly sensitive to the newly introduced hyperparameter $\eta$, we conducted experiments for \MixupS{} with the value of $\eta \in \{0.0001, 0.001, 0.01, 0.1\}$ with Preactresnet50 architecture and the CIFAR-100 dataset. This experiment was repeated five times with different initializations. We got the mean test error (in \%) of $20.23, 20.84, 21.01, 20.87$ for the $\eta$ values of $0.0001, 0.001, 0.01, 0.1$, respectively, vs the mean test-error $23.03$ for Mixup. This suggests that the proposed method \MixupS{} is not overly sensitive to the hyperparameter $\eta$ and works better than Mixup for a large range of $\eta$ values.

\textbf{For a large-scale image classification dataset}, we consider ImageNet~\citep{deng2009imagenet}, using three architectures: ResNet~\citep{he2016deep}, Vision Transformer \citep[ViT,][]{dosovitskiy2020image}, and CoAtNet~\citep{dai2021coatnet}.
In particular, we use ResNet-50, ViT-B/16, and CoAtNet-0.
We choose these architectures for experiments because they are fast to train, and they respectively represent the three families of image classification models: convolution-based models, attention-based models, and hybrid models.

\textbf{Except for the Mixup-related hyperparameters $\alpha$ and $\eta$, all training hyperparameters for these models follow their original paper.} Specifically, all models are trained and evaluated at the resolution of 224x224.
Our ResNet-50 is trained with a SGD+Momentum with the momentum coefficient of 0.9, while our ViT-B/16 and CoAtNet-0 are both trained with AdamW~\citep{loshchilov2018decoupled}, $\beta_1=0.9$ and $\beta_2=0.99$.
An $L_2$ weight decay of $10^{-4}$ is applied to our ResNet-50, while the larger weight decays of $0.05$ and $0.3$ are applied to our CoAtNet-0 and ViT-B/16, respectively.

All models were trained for 100K steps, with a global batch size of 4096.
Throughout these 100K training steps, the learning rate starts from 0 and warms up linearly to its peak value -- which is 1.6 for ResNet-50 and 0.001 for ViT and CoAtNet -- and then decreases to 1/1000 times the peak value following the cosine schedule.
For models with batch normalization, i.e., ResNet-50 and CoAtNet-0, the batch statistics during training are computed globally.
We also apply a Polyak moving average with the rate of $0.999$ on all parameters, \textit{including} the batch normalization cumulative statistics in the case of ResNet-50 and CoAtNet-0.

\textbf{Hyperparameters $\alpha$ and $\eta$:} we use the Mixup rate $\alpha = 0.2$ for ResNet-50, following the suggestions from~\citet{zhang2017mixup}.
Since ViT-B/16 and CoAtNet-0 were invented after Mixup, we first tune the value of $\alpha$ and found that $\alpha=0.4$ offers a sweet spot for these models.
Fixing $\alpha=0.2$ for ResNet-50 and $\alpha=0.4$ for ViT-B/16 and CoAtNet-0, we then tune the values for $\eta$ with ResNet-50.
We find that $\eta=10^{-3}$ is best for ResNet-50 while the smaller value of $\eta=5 \times 10^{-4}$ is best for ViT-B/16 and CoAtNet-0.

\textbf{ImageNet results:} Table~\ref{tab:imagenet} presents our results. We observe that \MixupS{} consistently outperforms Mixup across the three architectures in our experiments.
Notably, the gains of \MixupS{} -- in terms of top-1 accuracy -- are larger for ViT-B/16 and CoAtNet-0 than for ResNet-50,~\ie~+0.7 and +0.8 compared to +0.5, even though the top-1 accuracy of the Mixup baselines for ViT-B/16 and CoAtNet-0 are higher. We note that other extensions of Mixup, such as CutMix ~\citep{yun2019cutmix}, PuzzleMix ~\citep{kim2020puzzle}, and PatchUp ~\citep{faramarzi2020patchup} use \textit{non-linear} mixing of samples, thus they are not directly comparable with \MixupS{}. We leave an experimental comparison of \MixupS{} with these methods using a common implementation scheme (architecture and training/validation protocol) as future work.

\begin{table}[!ht]
\caption{\label{tab:imagenet}ImageNet accuracy of various models. \\ Each experiment was run for 3 times.}
\centering
\resizebox{0.9\linewidth}{!}{ %
\begin{tabular}{lccc}
  \toprule
  \textbf{Models} &
  \textbf{MixUp Type} &
  \textbf{Top-1} &
  \textbf{Top-5} \\
  \midrule
  \multirow{3}{*}{ResNet-50}
            & None    & \g{76.2}{0.5} & \g{93.6}{0.3} \\
            & MixUp   & \g{77.2}{0.2} & \g{94.0}{0.1} \\
            & \MixupS & \g{\textbf{77.7}}{0.2} & \g{\textbf{94.4}}{0.1} \\
  \midrule
  \multirow{3}{*}{ViT-B/16}
            & None    & \g{79.1}{0.2} & \g{95.1}{0.1} \\
            & MixUp   & \g{79.7}{0.2} & \g{95.4}{0.1} \\
            & \MixupS & \g{\textbf{80.4}}{0.1} & \g{\textbf{95.8}}{0.1} \\
  \midrule
  \multirow{3}{*}{CoAtNet-0}
            & None    & \g{79.8}{0.2} & \g{95.1}{0.1} \\
            & MixUp   & \g{80.8}{0.3} & \g{95.5}{0.1} \\
            & \MixupS & \g{\textbf{81.6}}{0.2} & \g{\textbf{95.7}}{0.2} \\
  \bottomrule
\end{tabular}
} %
\end{table}

\textbf{A note on implementation and runtime.} Despite these improvements, \MixupS{} requires twice as many forward passes as normal Mixup.
As shown in Algorithm \ref{alg:1}, the extra computation stems from the forward pass through the original (non-mixed) samples $X$,~\ie~$f_\theta(X)$. %Listing~\ref{fig: code_block},
For larger models running on ImageNet, this cost can lead to significantly slower experiment time.
To alleviate the computational burden, we ``batch'' this extra pass through the non-mixup data into the pass through the mixup data.
Thanks to this trick, our implementation of \MixupS{} is only 1.3 times slower than Mixup.

\iffalse
%vanilla , mixup, MixupS
%100 : 5.67, 5.38, 4.82, 
%200: 4.71, 4.53, 3.51
%400: 4.56, 4.82, 4.33
%800: 4.64, 4.49, 3.33
%1200: 4.54, 4.61, 4.07
\fi

\subsection{Tabular Datasets}
We consider a number of tabular environments drawn from the UCI dataset~\citep{lichman2013uci}, namely \textit{Arrhythmia, Letter, Balance-scale, Mfeat-factors, Mfeat-fourier, Mfeat-karhunen, Mfeat-morphological, Mfeat-zernike, CMC, Optdigits, Pendigits, Iris, Mnist\_784, Abalone} and \textit{Volkert}. 

We consider the same setting as~\citet{zhang2017mixup}, where the network is a fully-connected multi-layer perceptron (MLP) with two hidden layers, each with 128 dimensions, with ReLU activations for non-linearity. We train this network with the Adam optimizer using the cross-entropy loss with the default learning rate of $0.001$ and a batch size of 100, for 25 epochs. We feed in the categorical part of the data as one-hot inputs, and for any samples with missing features in the dataset, we fill it with the mean (for continuous) or mode (for discrete) of those features.

\begin{table}[!ht]
    \caption{ Classification Test Error (\%) on tabular datasets from UCI repository. Results are averaged over five trials.}
    \centering
    \renewcommand{\arraystretch}{1.25}
    \begin{tabular}{l  c c c}
    \toprule
    \textbf{Dataset}  & \multicolumn{3}{c}{\textbf{Method}} \\
    &  ERM & Mixup & \MixupS{} \\
    \midrule
Arrhythmia  & \g{\textbf{34.60}}{3.10} & \g{35.49}{3.88} & \g{34.85}{3.99} \\
Letter & \g{4.56}{0.27} & \g{\textbf{3.71}}{0.18} & \g{4.04}{0.20} \\
Balance-scale  & \g{3.87}{1.03} & \g{3.70}{1.00} & \g{\textbf{3.68}}{0.97} \\
Mfeat-factors  & \g{2.74}{0.81} & \g{\textbf{2.44}}{0.42} & \g{2.56}{0.64} \\
Mfeat-fourier  & \g{17.69}{1.76} & \g{17.80}{1.56} & \g{\textbf{17.57}}{1.60} \\
Mfeat-karhunen  & \g{3.74}{0.58} & \g{3.06}{0.29} & \g{\textbf{2.47}}{0.32} \\
Mfeat-morph  & \g{25.00}{2.10} & \g{\textbf{24.62}}{1.83} & \g{24.66}{1.30} \\
Mfeat-zernike  & \g{17.58}{1.72} & \g{\textbf{15.19}}{1.73} & \g{15.55}{0.62} \\
CMC  & \g{45.77}{1.49} & \g{46.67}{1.83} & \g{\textbf{45.42}}{2.05} \\
Optdigits  & \g{1.48}{0.19} & \g{\textbf{1.15}}{0.21} & \g{1.33}{0.14} \\
Pendigits  & \g{1.03}{0.25} & \g{0.76}{0.19} & \g{\textbf{0.72}}{0.16} \\
Iris  & \g{9.06}{7.01} & \g{8.14}{6.48} & \g{\textbf{7.29}}{6.95} \\
Mnist\_784  & \g{2.83}{0.11} & \g{2.57}{0.05} & \g{\textbf{2.56}}{0.14} \\
Abalone  & \g{35.05}{0.61} & \g{35.07}{0.69} & \g{\textbf{34.91}}{0.70} \\
Volkert  & \g{33.26}{0.62} & \g{32.74}{0.76} & \g{\textbf{32.54}}{0.61} \\

    %  \midrule
    %  \multicolumn{2}{c}{Total} & 85.48 & 85.58 & \textbf{85.79} \\
     \bottomrule
    \end{tabular}
    \label{tab:tabular}
\end{table}

% \looseness=-1
\textbf{Hyperparameters $\alpha$ and $\eta$:} We consider the hyper-parameters from the set $\alpha \in \{0.01, 0.02, 0.05, 0.1$, $ 0.2, 0.5, 1.0, 2.0, 5.0\}$ and $\eta \in \{0.0001, 0.001, 0.01$, $0.1, 1.0\}$ and run five seeds for each of the combinations and algorithms. Then, the value of $(\alpha, \eta)$ is chosen based on the best validation accuracy, corresponding to which we report the test accuracy for that particular dataset.

% \looseness=-1 % (see appendix for results on all datasets)
\textbf{Results:} Table~\ref{tab:tabular} presents our results for a subset of the tabular datasets. We observe that \MixupS{} outperforms the standard Mixup as well as ERM across multiple datasets. Among these 15 datasets, MixupE surpasses the baselines substantially (9 datasets) and achieves comparable performances as the best, such as Arrhythmia, Letter, Mfeat-zernike and Mfeat-morph (4 datasets). On the whole, MixupE has demonstrated considerable improvements over vanilla Mixup by considering the relative improvements of MixupE compared to the standard ERM training scheme.
% While it is true that a single data augmentation technique may not always lead to a significant improvement, we respectfully disagree with the view that MixupE only adds marginal improvements for ImageNet.
% In fact, our MixupE has demonstrated considerable improvements over vanilla Mixup by considering the relative improvements of MixupE compared to the standard ERM training scheme.

\subsection{Speech Dataset}

To have a rigorous comparison with \citet{zhang2017mixup}, similar to their work for the speech dataset, we use the Google commands dataset \citep{google_speech_dataset}. This dataset consists of 65000 one-second long utterances of 30 short words, such as \textit{yes, no, up, down, left, right, stop, go, on, off}, by thousands of different people. 30 short words correspond to 30 classes. We preprocess the utterances by first extracting the normalized spectrograms from the original waveform at a sampling rate of 16 kHz, followed by zero-padding the spectrograms to equalize their size at $160 \times 101$. This preprocessing step is exactly the same as \citet{zhang2017mixup}. Furthermore, similar to \citet{zhang2017mixup}, we use  LeNet \citep{lenet} and VGG-11 and VGG-13 \citep{vgg} architectures. We train all the models for 20 epochs using Adam optimizer with a learning rate of 0.001 and batch size of 100.

\iffalse
\begin{wraptable}{l}{5.0cm}
\caption{Classification Test Error on Google Speech Command Dataset \citep{google_speech_dataset}. We run each experiment five times.}
\label{tab:google_speech}
\begin{tabular}{cc}
\toprule  
LeNet &  Test Error (\%)\\
\midrule
ERM & $10.43\pm 0.052$\\
Mixup & $10.12\pm 0.041$\\ 
\MixupS{} & $10.02\pm 0.042$ \\
\midrule
VGG-11 & \\
\midrule
ERM & $6.04\pm 0.059$  \\
Mixup & $4.63\pm 0.047$ \\ 
\MixupS{} & $3.93\pm 0.050$  \\
\midrule
VGG-13 &  \\
\midrule
ERM & $5.77\pm 0.053$ \\
Mixup & $4.68\pm 0.039$ \\ 
\MixupS{} & $3.84\pm0.040$ \\
\bottomrule
\end{tabular}
\end{wraptable} 
\fi

\begin{table}[!ht]
    \caption{Classification Test Error (\%) on Google Speech Command Dataset \citep{google_speech_dataset}. We run each experiment five times}
    \centering
    \renewcommand{\arraystretch}{1.25}
    \begin{tabular}{c  c c c}
    \toprule
    \textbf{Architecture}  & \multicolumn{3}{c}{\textbf{Method}} \\
    &  ERM & Mixup & \MixupS{} \\
    \midrule
    LeNet  & \g{10.43}{0.052} & \g{10.12}{0.041} & \g{\textbf{10.02}}{0.042} \\
    VGG-11 & \g{6.04}{0.059} & \g{4.63}{0.047} & \g{\textbf{3.93}}{0.050} \\
    VGG-13  & \g{5.77}{0.053} & \g{4.68}{0.039} &  \g{\textbf{3.84}}{0.040} \\
    %  \midrule
    %  \multicolumn{2}{c}{Total} & 85.48 & 85.58 & \textbf{85.79} \\
     \bottomrule
    \end{tabular}
    
    \label{tab:google_speech}
\end{table}

\textbf{Hyperparameters $\alpha$ and $\eta$:} 
For all the architectures,  we first find the best value of hyperparameter $\alpha$ for Mixup from the set $\alpha \in \{0.1, 0.2, 0.5, 1.0, 2.0\}$. We observed that $\alpha = 0.2$ works best consistently for all architectures. For \MixupS{}, we used the best $\alpha$ values from Mixup and only fine-tuned the $\eta$ hyperparameter using $\eta \in \{0.001, 0.01, 0.1, 1.0\}$. In our experiments, $\eta=0.01$ works best for all the experiments.

\textbf{Results:} In Table \ref{tab:google_speech}, we observe that \MixupS{} improves the test error of Mixup for different architectures. Moreover, the improvement is more significant for larger architectures such as VGG-11 and VGG-13 than LeNet.

\subsection{Graph Datasets}
For graph classification, we consider the \textit{MUTAG, NCI1, PTC, PROTEINS, IMDB-BINARY} and \textit{IMDB-MULTI} datasets. We use the experimental settings defined in \citet{xu2018powerful} as the baseline system, where Mixup and \MixupS{} are performed after encoding the graph to a fixed dimensional vector, that is, at the graph-level readout stage. Each system here relies on 5 graph neural network layers that give rise to the readout, which a non-linear MLP then operates on. The models are trained for 350 epochs using the Adam optimizer with a learning rate of 0.01, which is halved every 50 epochs. For the hyperparameters, we consider $\alpha \in \{0.01, 0.02, 0.05, 0.1, 0.2, 0.5, 1.0, 2.0, 5.0\}$ and $\eta \in \{0.0001, 0.001, 0.01, 0.1, 1.0\}$. Corresponding to each model setting, we perform 10-fold validation, identify which epoch and hyperparameters give the best test accuracy, and report the algorithm's final mean and standard deviation over the ten folds. We refer the readers to Table \ref{tab:graph_pgnn}, which shows the benefits of using \MixupS{} on the graph datasets.

In conclusion, \MixupS{} outperforms than vanilla Mixup on different types of datasets. The proposed regularizer effectively improve the generalization of Mixup.
% \begin{minipage}[t]{0.5\textwidth}
\begin{table}[!ht]
    \caption{ Classification Test Error (\%) on graph datasets from the TUDatasets benchmark when following the setup of \citet{xu2018powerful}. Results are obtained from 10-fold validation.}
    \centering
    \renewcommand{\arraystretch}{1.25}
    \begin{tabular}{l  c c c}
    \toprule
    \textbf{Dataset}  & \multicolumn{3}{c}{\textbf{Method}} \\
    &  ERM & Mixup & \MixupS{} \\
    \midrule
MUTAG & \g{10.15}{0.06} & \g{10.67}{0.05} & \g{\textbf{10.06}}{0.06}\\
NCI1 & \g{17.79}{0.02} & \g{18.59}{0.02} & \g{\textbf{17.74}}{0.01}\\
PTC & \g{38.37}{0.09} & \g{\textbf{34.87}}{0.08} & \g{35.50}{0.08}\\
PROTEINS & \g{25.43}{0.04} & \g{24.44}{0.04} & \g{\textbf{23.72}}{0.04}\\
IMDBBINARY & \g{25.60}{0.03} & \g{25.30}{0.03} & \g{\textbf{25.20}}{0.03}\\
IMDBMULTI & \g{50.33}{0.03} & \g{49.27}{0.04} & \g{\textbf{48.53}}{0.03}\\
    \bottomrule
    \end{tabular}
    \label{tab:graph_pgnn}
\end{table}
% \end{minipage}

\begin{table}[!ht]
\caption{Ablation experiments to understand the effect of the additional loss term in Equation \ref{eq:additional_term}. Each experiment was run 5 times.}
\centering
\resizebox{0.9\linewidth}{!}{ %
\begin{tabular}{lc}
  \toprule
  \textbf{Method} &
  \textbf{Test Error} \\
  \midrule
  ERM & 24.68 \\
  Mixup & 23.03 \\
  ERM+additional loss & 22.42 \\
  Mixup+additional loss (\MixupS{}) & 20.23 \\
  \bottomrule
\end{tabular}
\label{tab:ablation}
}
\end{table} %

\begin{table*}[!t]
\caption{Test accuracy on novel deformations. All models are trained on normal CIFAR-100.}
\centering
% \resizebox{0.9\textwidth}{!}{ %
\begin{tabular}{lccccc}
  \toprule
  \textbf{Test Set Deformation} &
  Mixup ($\alpha=1$) & Mixup ($\alpha=2$) & Manifold Mixup ($\alpha=2$) & Ours ($\alpha=1$)\\
  \midrule
Rotation $U(-20,20)$ &	55.55 &	56.48 &	60.08 &	\textbf{62.23} \\
Rotation $U(-40,40)$ &	37.73 &	36.78 &	42.13 &	\textbf{43.08} \\
Shearing $U(-28.6,28.6)$ &	58.16 &	60.01 &	62.85 &	\textbf{63.94} \\
Shearing $U(-57.3,57.3)$ & 39.34 & 39.70 & \textbf{44.27} & \textbf{43.87} \\
Zoom In (60\% rescale) &	13.75 &	13.12 &	11.49 &	\textbf{15.66} \\
Zoom In (80\% rescale) &	52.18 &	50.47 &	52.70 &	\textbf{54.22} \\
Zoom Out (120\% rescale) &	60.02 &	61.62 &	\textbf{63.59} &	61.39 \\
Zoom Out (140\% rescale) &	41.81 &	42.02 &	\textbf{45.29} &	36.58 \\
  \bottomrule
\end{tabular}
\label{tab:deformation}
\end{table*} %

\subsection{Ablation Experiments}
In \MixupS{}, we have proposed to add an additional loss term derived from the first-order derivative (\ref{eq:additional_term}) to the Mixup Loss \ref{eq:joint_loss}. A natural question arises: what would be the performance of adding this term to the ERM loss? We conduct an ablation study to investigate this question. Specifically, we compare the following four methods : 1) ERM, 2) Mixup, 3) ERM+additional loss, and 4) Mixup+additional loss (\MixupS{}). The test error on the CIFAR-100 dataset using the Preactresnet50 architecture for the abovementioned method is shown in Table \ref{tab:ablation}.

Results in Table \ref{tab:ablation} show that adding the additional loss term of Eqn (\ref{eq:additional_term}) improves the test accuracy in Mixup. This is consistent with our argument in Section \ref{sec:2} that Mixup can have an unintended effect of maximizing $\|\mathbf{J}_{f_\theta} (\bx_i)_{(j)}\|_2$. Furthermore, we observe that Mixup+additional loss ( \MixupS{}) performs better than ERM+additional loss; this indicates that the implicit regularization of higher order directional derivative through Mixup training is important for better test errors, thus justifying our proposed method.

\subsection{Generalization to Novel Deformations}
Following \cite{manifold_mixup}, we also evaluate the robustness of the representations learned by MixupE and compare it to other baselines. For our method, we use the PreActResNet18 only trained with 400 epochs instead of 1200 epochs of Manifold Mixup, which means fewer training epochs were used to obtain our results than other baselines reported in \cite{manifold_mixup}. As shown in Table \ref{tab:deformation}, the results indicate that our method consistently outperforms the other methods in most test set deformations. Specifically, for rotation in the range of $U(-20,20)$ and Zoom In (Rows 1, 5, 6), our method significantly improved over all baselines, which is the highest among all methods. MixupE again outperforms the other methods in Rows 2 and 3 for the rest settings and achieves similar accuracy to the previous SOTA in Row 4. These results suggest that \MixupS{} has a better generalization to novel deformation test data. 
% our method achieves an accuracy of 62.23 \%, which is all methods. 
% Similarly, for shearing in the range of $U(-57.3,57.3)$, our method achieves an accuracy of \textbf{43.87}, which is the highest among all methods. For zoom-in (80\% rescale), our method achieves an accuracy of \textbf{54.22}, again outperforming the other methods. Moreover, for zoom out (120\% rescale), our method achieves an accuracy of \textbf{63.59}, which is the highest among all methods.

\section{Conclusion and limitations}
In this work, we have theoretically derived a new method to improve Mixup. %without using any domain-specific properties of the datasets.
Our theory shows that Mixup is a computationally efficient way to regularize directional derivatives of all orders (see Theorem \ref{thm:1}). Based on this intuition, we propose a new Mixup variant, termed \MixupS{}, a simple and one-line code modification of the original Mixup. Our proposed method is mathematically designed to strengthen the regularization effect of Mixup with a generalization improvement guarantee (see Theorem \ref{thm:2}). Empirically, \MixupS{} outperforms Mixup on several datasets, such as image, tabular, and speech datasets, trained with various networks. 
The improvement in test error is more significant for networks with larger capacities. As a limitation, our method requires one additional forward pass in the network during training than Mixup but only suffers an extra $30\%$ time cost than Mixup. While we only approximate the first-order term for the computational efficiency, our results suggest a promising future research direction to enhance Mixup by studying higher-order terms in Theorem \ref{thm:1}.

\begin{acknowledgements} 
This research/project is supported by the National Research Foundation, Singapore under its AI Singapore Programme (AISG Award No: AISG-GC-2019-001-2A) and by the Google Cloud Research Credits program with the award (6NW8-CF7K-3AG4-1WH1). The computational work for this article was partially performed on resources of the National Supercomputing Centre, Singapore (https://www.nscc.sg).

\end{acknowledgements}

% References
\bibliography{MixupE_129}

%%%%%%%%%%%%%%%%%%%%%%%%%%%%%%%%%%%%%%%%%%%%
% Appendix 
%%%%%%%%%%%%%%%%%%%%%%%%%%%%%%%%%%%%%%%%%%%%
\newpage
\appendix
\onecolumn
\section{Notations}
 We denote by $z=(\bx, \by)$ the input and output pair where $\bx \in\cX\subseteq\bR^d$ and $\by \in\cY\subseteq\bR^C$. Let $f_\theta(\bx) \in \RR^C $ be the output of the logits (i.e., the last layer before the softmax or sigmoid) of the model parameterized by $\theta$. We use $\ell(\theta, \bz) = h(f_\theta(\bx)) - \by\T f_\theta(\bx) $ to denote the loss function. 
Let $g(\cdot)$ be the activation function. We use $\bx_{(i)}$ to index $i$-th element of the vector $\bx$ and $\bx_j$ to represent $j$-th variable in a set. The notation list is:
\begin{itemize}
    \item $S = \{\bx_i, \by_i\}_{i \in [n]}$ is the fixed training set while $\bxp$ is the random test sample.
    \item $\ell$ is the loss function for any data point.
    \item $L_n^{mix}(\theta, S)$: empirical risk of Mixup of size $n$ with parameters $\theta$.
    \item $\mathcal{L}$: empirical risk of \MixupS{}.
    \item $\Theta$: the constraint set of parameters $\theta$.
    \item $\mathcal{R}(\Theta, S)$: Empirical Rademacher complexity of set $\Theta$ over training set $S$.
    \item $\mathbf{J}_{a}(b)$: Jacobian matrix of $a$ w.r.t $b$.
\end{itemize}

\section{Proof of Theorem 1}%\ref{thm:1}}
\label{app:theorm1}
\begin{proof}
For the cross-entropy loss, we have 
\begin{equation}
    \ell(\theta, (\bx, \by)) = -\log\frac{\exp(\by\T f_\theta(\bx))}{\sum_{j}\exp(f_\theta(\bx)_{(j)})}=\log\left(\sum_{j}\exp(f_\theta(\bx)_{(j)})\right)-\by\T f_\theta(\bx)
\end{equation}
where $\by \in \RR^C$ is a one-hot vector.
For the logistic loss, 
\begin{equation}
\ell(\theta,(\bx, \by)) = -\log\frac{\exp(\by f_\theta(\bx))}{1+\exp(f_\theta(\bx))}=\log\left(1+\exp(f_\theta(\bx) \right)-\by f_\theta(\bx).
\end{equation}
Thus, for both cases, we can write 
\begin{equation}
    \ell(\theta,(\bx, \by)=h(f_\theta(\bx))- \by\T f_\theta(\bx)
\end{equation}
where 
$h(\bz)=\log\left(\sum_{j}\exp(\bz_j)\right)$ for the cross-entropy loss and $h(\bz)=\log(1+\exp(\bz))$ for the logistic loss. Using this and equation (9) of \citep{zhang2021does}, we have that 
$$
\Lnmix(\theta,S)=\frac{1}{n}\sum_{i=1}^n \EE_{\lambda\sim\Dcal_\lambda}\bE_{\bxp\sim \Dcal_X} l(\theta,(r_i(\bxp), \by_i)),
$$
where $\Dcal_X$ is the empirical distribution induced by training samples, and
\begin{equation}
    r_i(\bx)=\lambda \bx_i +(1-\lambda) \bx.
\end{equation}
Define $a_\lambda = 1-\lambda$. Then, 
\begin{equation}
    r_i(\bxp) = (1-a_\lambda) \bx_i +a_\lambda \bxp= \bx_i +a_\lambda(\bxp - \bx_{i}).
\end{equation}
Define 
\begin{equation}
    \varphi_{i}(a_\lambda) := f_\theta(  \bx_i + a_\lambda (\bxp - \bx_{i}))
\end{equation}
Assume $f_\theta$ lies in the $C^K$ manifold ($K$-times differentiable), then there exists a function $\psi_{i}$ such that $\lim_{a_\lambda \rightarrow 0}\psi_{ i }(a_\lambda)=0$ and with Taylor expansion at $a_\lambda=0$, we have
\begin{equation}
    \begin{aligned}
    \varphi_{i}(a_\lambda) 
    &= \varphi_{i}(0)+  \sum_{k=1}^K \frac{a_\lambda^{k}}{k!} \varphi^{(k)}_{i}(0)+a_\lambda^{K} \psi_{i}(a_\lambda)  \\
    &= f_\theta(  \bx_i )+  \sum_{k=1}^K \frac{a_\lambda^{k}}{k!} \varphi^{(k)}_{i}(0)+a_\lambda^{K} \psi_{i}(a_\lambda)
    \label{eq:varphi}
    \end{aligned}
\end{equation}
where  $\varphi^{(k)}_i(0)$ is the $k$-th order derivative at $a_\lambda = 0$, $\psi_{i}(a_\lambda)$ is the remainder term:
\begin{equation}
    \psi_{i}(a_\lambda) =  \int_{\RR} \varphi^{(K)}_i(a_\lambda) d a_\lambda - \frac{1}{k!} \varphi^{(K)}_i(0)
\end{equation}
Here, for any $k \in \NN^+$, we have
\begin{equation}
\begin{aligned}
        \varphi^{(k)}_{i}(0) 
        &= \varphi^{(k)}_{i}(a_\lambda)|_{a_\lambda=0} =
     \frac{\partial^{k} f_\theta(\bx_i + a_\lambda (\bxp - \bx_{i}))}{\partial (\bx_i + a_\lambda (\bxp - \bx_{i}))^k} (\bxp - \bx_{i})^{\otimes k}\bigg |_{a_\lambda=0} \\
     &=\frac{\partial^{k} f_\theta(\bx_i)}{\partial (\bx_i)^k} (\bxp - \bx_{i})^{\otimes k} \\
\end{aligned}
\end{equation} 
where $\otimes$ denotes Kronecker product and thus $(\bxp - \bx_{i})^{\otimes k} \in \RR^{d^k}$. We can then rewrite $\varphi^{(k)}_{i}(0)$ as
\begin{equation}
    \varphi^{(k)}_{i}(0) = \mathbf{J}^k_{f_\theta}(\bx_i) (\bxp - \bx_{i})^{\otimes k}
\end{equation}
Plug back into the (\ref{eq:varphi}), we have
\begin{equation}
\begin{aligned}
        f_\theta(  \bx_i + a_\lambda(\bxp - \bx_{i}))
        &=f_\theta(  \bx_i )+  \sum_{k=1}^K \frac{a_\lambda^{k}}{k!} \mathbf{J}^k_{f_\theta}(\bx_i) (\bxp - \bx_{i})^{\otimes k} + a_\lambda^{K} \psi_{i}(a_\lambda) \\
        &= f_\theta(  \bx_i )+  a_\lambda \underbrace{\left( \sum_{k=1}^K \frac{a_\lambda^{k-1}}{k!} \mathbf{J}^k_{f_\theta}(\bx_i) (\bxp - \bx_{i})^{\otimes k} + a_\lambda^{K-1} \psi_{i}(a_\lambda) \right)}_{\Delta_{i}}
\end{aligned}
\end{equation}
% Let $\Delta_{i}=\sum_{k=1}^K \frac{a_\lambda^{k-1}}{k!} \partial ^{k}f_\theta(\bx_i ) (\bxp - \bx_{i})^{\otimes k} + a_\lambda^{K-1} \psi_{i}(a_\lambda)$, 
Above equation will be
\begin{equation}
\begin{aligned}
\ell(\theta, (r_i(\bx), \by_i)) 
& =\ell[\theta,(  \bx_i +a_\lambda(\bxp - \bx_{i}), \by_i)]\\ 
& =h(f_\theta(  \bx_i +a_\lambda(\bxp - \bx_{i})))-\by_i\T f_\theta(  \bx_i +a_\lambda(\bxp - \bx_{i}))  \\ 
& =h(f_\theta(  \bx_i )+a_\lambda \Delta_{i})-\by_i\T (f_\theta( \bx_i )+a_\lambda\Delta_{i}).
\end{aligned}
\end{equation} 
Analogously, we can define $\hat \varphi^{(k)}_{i}(a_\lambda) := h(f_\theta(  \bx_i )+a_\lambda \Delta_{i})$ and the parallel notation $\hat \psi_{i}(a_\lambda)$, then
\begin{equation}
    h(f_\theta(  \bx_i )+a_\lambda \Delta_{i})=h(f_\theta(  \bx_i ))+  \sum_{k=1}^{K} \frac{a_\lambda^{k}}{k!} \mathbf{J}^k_{h \circ f_\theta}(\bx_i) \Delta_{i}^{\otimes k}+a_\lambda^{K} \hat \psi_{i}(a_\lambda)
\end{equation}
Combining these,
\begin{equation}
   \begin{aligned}
\ell(\theta,( r_i(\bx), \by_i)) 
& = h(f_\theta(  \bx_i ))-\by_i\T f_\theta(  \bx_i )-a_\lambda \by_i\Delta_{i} + \sum_{k=1}^{K} \frac{a_\lambda^{k}}{k!} \mathbf{J}^k_{h \circ f_\theta}(\bx_i) \Delta_{i}^{\otimes k}+a_\lambda^{K} \hat \psi_{i}(a_\lambda) \\ 
& =\ell(\theta,(\bx, \by_i)) - a_\lambda \by_i\T \Delta_{i}+  \sum_{k=1}^{K} \frac{a_\lambda^{k}}{k!} \mathbf{J}^k_{h \circ f_\theta}(\bx_i) \Delta_{i}^{\otimes k}+a_\lambda^{K} \hat \psi_{i}(a_\lambda)
\end{aligned} 
\end{equation}
Thus, the implicit regularization of Mixup can be unfolded as
\begin{equation}
   \begin{aligned}
\Lnmix(\theta, S) 
&=\frac{1}{n}\sum_{i=1}^n \EE_{\lambda\sim\Dcal_\lambda}\bE_{\bx\sim \Dcal_X} l(\theta,( r_i(\bx), \by_i))  \\
& =\Ln(\theta, S)+ \frac{1}{n}\sum_{i=1}^n \EE_{\lambda\sim\Dcal_\lambda}\bE_{\bx\sim \Dcal_X}\left(  \sum_{k=1}^{K} \frac{a_\lambda^{k}}{k!} \mathbf{J}^k_{h \circ f_\theta}(\bx_i) \Delta_{i}^{\otimes k}-a_\lambda \by_i\T \Delta_{i}+a_\lambda^{K} \hat \psi_{i}(a_\lambda) \right),
\end{aligned} 
\end{equation}
 where
\begin{equation}
    \Delta_{i}=\sum_{k=1}^K \frac{a_\lambda^{k-1}}{k!} \mathbf{J}^k_{f_\theta}(\bx_i) (\bxp - \bx_{i})^{\otimes k}+a_\lambda^{K-1} \psi_{  i }(a_\lambda).
\end{equation}
Note that with probability $1$, we have 
$$\lim_{a_\lambda \rightarrow 0} \hat \psi_i(a_\lambda)=0, \lim_{a_\lambda \rightarrow 0} \psi_{i}(a_\lambda)=0$$

\end{proof}

\section{Proof of Theorem 2}
\label{app:theorem2}

The Rademacher generalization bound is widely applied where the empirical Rademacher complexity of a function class $\Theta$ is given by:
\begin{equation}
    \mathcal{R}_n(\Theta, \{\bx_i\}_{i\in [n]})=\mathbb{E}\left[\sup _{\theta \in \Theta} \frac{1}{n} \sum_{i=1}^n f_{\theta} \left(\bx_i\right) \epsilon_i\right]
\end{equation}
where, Rademacher r.v $\epsilon_i$ independently takes values in $\{-1, +1\}$ with equal probability. 
\begin{lemma} (\cite{bartlett2002rademacher}). For any B-uniformly bounded and $L$ Lipchitz function $\zeta$, for all $\phi \in \Phi$, with probability at least $1-\delta$,
$$
\mathbb{E} \zeta\left(\phi \left(\bx_i\right)\right) \leq \frac{1}{n} \sum_{i=1}^n \zeta\left(\phi\left(\bx_i\right)\right)+2 L \mathcal{R}_n(\Phi, S)+B \sqrt{\frac{\log (1 / \delta)}{2 n}}
$$
\label{lemma:1}
\end{lemma}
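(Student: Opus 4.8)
The plan is to prove this as the classical three-step Rademacher generalization argument: bounded-differences concentration, symmetrization, and a Lipschitz contraction. Define the one-sided supremum deviation
$$\Psi(S) = \sup_{\phi \in \Phi}\Big( \mathbb{E}\,\zeta(\phi(\bx)) - \frac{1}{n}\sum_{i=1}^n \zeta(\phi(\bx_i))\Big),$$
and observe that it suffices to show $\Psi(S) \le 2L\,\mathcal{R}_n(\Phi, S) + B\sqrt{\log(1/\delta)/(2n)}$ with probability at least $1-\delta$, since the claimed inequality is exactly the defining bound of $\Psi$ applied to an arbitrary fixed $\phi \in \Phi$.

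First I would establish concentration of $\Psi$ around its mean. Because $\zeta$ takes values in an interval of width $B$, replacing a single sample $\bx_i$ by any other point changes $\Psi$ by at most $B/n$: the factor $\tfrac{1}{n}$ comes from the empirical average, and the difference of two admissible values of $\zeta$ is at most $B$. McDiarmid's bounded-differences inequality then yields $P(\Psi - \mathbb{E}\Psi \ge t) \le \exp(-2nt^2/B^2)$; setting the right-hand side equal to $\delta$ gives $t = B\sqrt{\log(1/\delta)/(2n)}$, so with probability at least $1-\delta$ we have $\Psi(S) \le \mathbb{E}\Psi(S) + B\sqrt{\log(1/\delta)/(2n)}$.

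Next I would control $\mathbb{E}\Psi(S)$ by symmetrization. Introducing an independent ghost sample $S' = \{\bx_i'\}$ with the same law and writing $\mathbb{E}\,\zeta(\phi(\bx)) = \mathbb{E}_{S'}\tfrac{1}{n}\sum_i \zeta(\phi(\bx_i'))$, Jensen's inequality moves the ghost expectation inside the supremum, and inserting i.i.d.\ Rademacher signs $\epsilon_i$ (legitimate because each pair $(\bx_i,\bx_i')$ is exchangeable) gives the standard bound
$$\mathbb{E}\Psi(S) \le 2\,\mathbb{E}\Big[\sup_{\phi \in \Phi}\frac{1}{n}\sum_{i=1}^n \epsilon_i\, \zeta(\phi(\bx_i))\Big] = 2\,\mathcal{R}_n(\zeta\circ\Phi, S).$$

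Finally I would strip off $\zeta$ with the Ledoux--Talagrand contraction principle: since $\zeta$ is $L$-Lipschitz, $\mathcal{R}_n(\zeta\circ\Phi, S) \le L\,\mathcal{R}_n(\Phi, S)$. Chaining the three bounds completes the proof. The main obstacle is the contraction step, which must be invoked in the form that does not require $\zeta(0)=0$ and that matches the \emph{signed} (non-absolute-value) Rademacher complexity used in the definition of $\mathcal{R}_n$; I would cite the comparison inequality for Rademacher processes in that precise form rather than the absolute-value variant, since otherwise a spurious $\zeta(0)$ correction term would appear. The symmetrization and McDiarmid steps are routine given the boundedness and i.i.d.\ assumptions.
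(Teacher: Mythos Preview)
Your argument is the standard three-step proof (McDiarmid, symmetrization, Ledoux--Talagrand contraction) and is correct; the handling of the $\zeta(0)$ issue in the contraction step is also right, since for the signed Rademacher average the constant term $\zeta(0)\cdot\frac{1}{n}\sum_i\epsilon_i$ separates from the supremum and has mean zero.

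There is nothing to compare against: the paper does not prove this lemma at all. It is stated as a citation to \cite{bartlett2002rademacher} and then invoked as a black box in the proof of Theorem~\ref{thm:2}. Your write-up therefore supplies strictly more than the paper does on this point.
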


\begin{proof}
Consider GLM that $h(f_\theta(\bx)) = A(\theta\T \bx)$ and training set $S$, $y \in \{0, 1\}$ and the constraint of $ \Theta = \{\bx \to f_\theta(\bx) | \sup_{\bx} | \hat{q}(\bx) | \leq \gamma \}$ implies that
\begin{equation}
     \sup_{\bx} | \hat{q}_i(\bx) | = \sup_{\bx} | (y - A'(\theta^\top \bx)) (\theta^\top \bx) | \leq \gamma 
\end{equation}
Since $A'(\theta^\top \bx)$, we have %, and by Cauchy–Schwarz inequality
\begin{equation}
\begin{aligned}
    \gamma  
    & \geq \sup_{\bx}  |y - A'(\theta^\top \bx)| \sqrt{\theta^\top \bx \bx^\top  \theta } \\
    & \geq \sup_{\bx}  |y - A'(\theta^\top \bx)| \sqrt{\theta^\top \Sigma_X  \theta } 
%     & \geq \sup_{\bx} | y \theta^\top \bx | - \sup_{\bx} | A'(\theta^\top \bx)|  | \theta^\top \bx | \\
%     % & \geq \sup_{\bx} y \theta^\top \bx - \sup_{\bx} \| A'(\theta^\top \bx)  \theta^\top \bx \|_2
\end{aligned}
\end{equation}
Due to the fact that $A(\cdot)$ is a $L_A$ Lipchitz function, then it's trivial to prove
\begin{equation}
     | A'(\theta^\top \bx) | \leq L_A
\end{equation}
% Let $\by = (\theta^*)^\top \bx = (\Sigma \theta)^\top \bx$ where $\Sigma$ is the diagonal matrix. Thus the above relation will be
% \begin{equation}
% \begin{aligned}
%     \gamma 
%     & \geq \sup_{\bx}  | y \theta^\top \bx - \sup_{\bx} L_A \theta^\top \bx | \\
%     & \geq \sup_{\bx} | ((\Sigma \theta)^\top \bx ) ( \theta^\top \bx ) | - L_A \sup_{\bx} | \theta^\top \bx | \\
% \end{aligned}
% \end{equation}
% Let $v =  \theta^\top \bx$ and $\overline{\sigma}$ be the expected value that $\overline{\sigma} = \EE_{j \in [d]} \sup_{\bx_i} \Sigma_i{(j)} =  \sup_\bx \frac{\operatorname{tr}(\Sigma)}{d}$, then we have 
% \begin{equation}
%     \gamma \geq  \overline{\sigma} v^2 - L_A  | v |
% \end{equation}
Let $\Sigma_X = \bx \bx^\top $,  $\mathbf{v} =  \Sigma^{1/2}_X  \theta$, then we have
\begin{equation}
    \| \mathbf{v} \|^2 \leq \frac{\gamma^2}{(1+L_A)^2} 
\end{equation}
% which implies
% \begin{equation}
%    \frac{L_A - \sqrt{L_A^2 + 4\gamma \overline{\sigma}}}{2 \overline{\sigma}} \leq \|\mathbf{v}\|_2 \leq \frac{L_A + \sqrt{L_A^2 + 4\gamma \overline{\sigma}}}{2 \overline{\sigma}}
% \end{equation}
% Obviously,  
% \begin{equation}
%     \left|\frac{L_A + \sqrt{L_A^2 + 4\gamma \overline{\sigma}}}{2 \overline{\sigma}} \right| > \left|\frac{L_A - \sqrt{L_A^2 + 4\gamma \overline{\sigma}}}{2 \overline{\sigma}} \right|
% \end{equation}
Denote $\hat{\mathbf{x}}_i = \Sigma^{1/2}_X  \bx_i$, we have the Rademacher complexity $\mathcal{R}(\Theta, S)$ that
\begin{equation}
    \begin{aligned}
\mathcal{R}\left(\Theta, S\right) 
& =\mathbb{E}_{\epsilon} \sup_{\sup_\bx | \hat{q}(\bx) | \leq \gamma} \frac{1}{n} \sum_{i=1}^n \epsilon_i \theta^{\top} \bx_i \\
% & \leq \sqrt{2}L \mathbb{E}_{\epsilon} \sup_{\sup_\bx \hat{q}(\bx) \leq \gamma} \frac{1}{n} \sum_{i=1}^n \langle \bm{\epsilon}_i,  \theta^{\top} \bx_i \rangle \\
& \leq \mathbb{E}_{\epsilon} \sup_{ \|\mathbf{v}_i\|_2 \leq \frac{\gamma^2}{(1+L_A)^2}  } \frac{1}{n} \sum_{i=1}^n \epsilon_i  \mathbf{v}_i^\top \hat{\mathbf{x}}_i \\
& \leq \frac{1}{n} \cdot  \frac{\gamma}{1+L_A} \cdot  \sqrt{\mathbb{E}_{\epsilon} \left \| \sum_{i=1}^n  \epsilon_{i} \hat{\mathbf{x}}_i  \right\|^2} \\%\sqrt{\mathbb{E}_{\epsilon}\left(\sum_{i=1}^n \epsilon_i \right)^2} \\
& \leq \frac{1}{n} \cdot  \frac{\gamma}{1+L_A} \cdot  \sqrt{ \sum_{i=1}^n   \hat{\mathbf{x}}_i^\top \hat{\mathbf{x}}_i  } 
% \\
% & \leq \frac{1}{n} \cdot  \frac{\gamma}{1+L_A} \cdot  \sqrt{ \sum_{i=1}^n   \hat{\mathbf{x}}_i^\top \hat{\mathbf{x}}_i  } 
\end{aligned}
\end{equation}
Consequently, we have 
\begin{equation}
\begin{aligned}
 \mathcal{R}\left(\Theta, S\right) 
 \leq  \frac{1}{n} \cdot  \frac{\gamma}{1+L_A} \cdot  \sqrt{ \sum_{i=1}^n   \mathbf{x}_i^\top \Sigma_X \mathbf{x}_i }   = \frac{1}{\sqrt{n}} \cdot  \frac{\gamma}{1+L_A} \mathcal{X}   
\end{aligned}
\end{equation}
Recall the objective of \MixupS{}, 
\begin{align}
\Lcal(\theta, S) &:= \hat{\eta} \left(L^{mix}_n(\theta, S)+ \eta R(\theta, S) \right) \\
\hat{\eta} &= \frac{|L^{mix}_n(\theta, S)|}{|L^{mix}_n(\theta, S)+ \eta R(\theta, S)|} 
\end{align}  
Define the expected risk of $\hat{L} (\theta) := \EE_{S} L_n^{mix} (\theta, S)$
% \MixupS ~$\tilde{\Lcal} (\theta) := \EE_{\bz} \Lcal(\theta, \bz)$, with Lemma \ref{lemma:1}, we can get
\begin{equation}
\begin{aligned}
    \hat{L} (\theta)
    % \tilde{\Lcal} (\theta)
    & \leq  L^{mix}_n(\theta, S) +  2 L_A \mathcal{R}(\Theta, S) +  B \sqrt{\frac{\log(1/\delta)}{2n}} \\
    & \leq \hat{\eta} L^{mix}_n(\theta, S) +  \frac{ 2 L_A }{\sqrt{n}}   \frac{\gamma\mathcal{X}}{1+L_A}   +  B \sqrt{\frac{\log(1/\delta)}{2n}}
\end{aligned}
\end{equation}
% \begin{equation}
% \begin{aligned}
%     \mathcal{L}(\theta, S) 
%     & \leq \hat{\eta} L^{mix}_n(\theta, S) +  2 \hat{\eta} \eta L \mathcal{R}(\Theta, S) +  B \sqrt{\frac{\log(1/\delta)}{2n}} \\
%     & \leq \hat{\eta} L^{mix}_n(\theta, S) +  \frac{ 2 \hat{\eta} \eta LL_A(L_A + \sqrt{\gamma \overline{\sigma}}) }{ \overline{\sigma}\sqrt{n}}  +  B \sqrt{\frac{\log(1/\delta)}{2n}}
% \end{aligned}
% \end{equation}
% \mathcal{R}\left(\Theta, S\right)  
% & \leq \frac{1}{n} \frac{\gamma}{\rho n} \cdot \sqrt{\sum_{i=1}^n \tilde{\bx}_i^{\top} \tilde{\bx}_i}\\ %\cdot\left(\frac{\gamma}{\rho}\right)^{1 / 4} \vee\left(\frac{\gamma}{\rho}\right)^{1 / 2} 
% & \leq \frac{1}{\sqrt{n}} \frac{\gamma}{\rho n} \cdot \operatorname{rank}\left(\Sigma_X\right) .
% \end{aligned}  
% \end{equation}
% Based on this bound on Rademacher complexity, Corollary $3.1$ can be proved by directly applying the following theorem.
\end{proof}

\subsection{Comparison to vanilla Mixup}
\label{app:comparison}
As a comparison, for vanilla Mixup with parameter space $\hat \Theta = \{ \theta | \|\theta\|_2^2 \leq \xi \}$ and assume $\left\| \bx_i \right\|^2 \leq \mathcal{X}, \forall i \in [n]$ the Rademacher complexity will be
\begin{equation}
\begin{aligned}
    \mathcal{R}(\hat \Theta, S) 
    &= \mathbb{E}_{\epsilon} \sup_{\|\theta\|_2^2 \leq \xi } \frac{1}{n} \sum_{i=1}^n \epsilon_i \theta^{\top} \bx_i \\
    &\leq \frac{1}{n} \mathbb{E}_{\epsilon}  \sup_{\|\theta\|_2^2 \leq \xi } \|\theta\| \sqrt{ \left\| \sum_{i=1}^n \epsilon_i \bx_i \right\|^2 } \\
    % &= \frac{1}{n} \mathbb{E}_{\epsilon} \sup_{\|\theta\|_2^2 \leq \gamma} \sqrt{  \sum_{i=1}^n \epsilon^2_i \left \| \theta \right\|^2 \left\| \bx_i \right\|^2 } \\ 
    &\leq \frac{\sqrt{\xi }}{n} \mathbb{E}_{\epsilon}  \sqrt{ \left\| \sum_{i=1}^n \epsilon_i \bx_i \right\|^2 } \\
    &\leq \frac{\sqrt{\xi }}{n}   \sqrt{  \sum_{i=1}^n \left\| \bx_i \right\|^2 } \\
    & \leq \frac{\sqrt{\xi  \mathcal{X}}}{\sqrt{n}}
\end{aligned}
\label{eq:mixup_rademacher}
\end{equation}
% Compared to the Rademacher complexity of Mixup,
% we found that \MixupS{} don't need to bound the norm of input data by $\mathcal{X}$ which may cause a large term.
However, if considering normalized input space where $\mathcal{X} = 1$, the condition to have a shrink parameter space is
\begin{equation}
\frac{  L_A \gamma\mathcal{X} }{ (1+L_A) } \leq L_A \sqrt{ \xi \mathcal{X}} \Rightarrow \gamma \sqrt{\mathcal{X}} \leq (1+L_A) \sqrt{\xi} 
    % \frac{ L_A + \sqrt{\gamma \overline{\sigma}} }{ \overline{\sigma}} \leq \sqrt{\gamma} \Rightarrow \frac{L_A}{\overline{\sigma} - \sqrt{\overline{\sigma}}} \leq \sqrt{\gamma} ~~~\text{and}~~~ \overline{\sigma} > 1
\end{equation}
When the data is normalized, it is true that $\sqrt{\mathcal{X}} \leq 1$. Therefore, the above inequality will be 
\begin{equation}
    \gamma \leq (1+L_A) \sqrt{\xi}
\end{equation}
Thus, when the above condition is satisfied, our regularization reduces the norm of parameter space. In general, Lipschitz constant $L_A$ can be large depending on the type of activation function. In summary, shrinking the parameter space by the regularization of \MixupS~~ probably satisfies in most cases.
% In general, the $\overline{\sigma}$ is the average entry value of the maximum correction matrix to the ground truth which can be quite large. Scaling by $\sigma$, it is probably satisfied in most cases.

% \newpage
\section{Implementation}
\label{app:implementation}
The code implementation in PyTorch is shown as Listing \ref{fig: code_block}.
\begin{python}[float=*h, caption={One epoch \MixupS{} training in PyTorch}, label={fig: code_block}]
def beta_mean(alpha, beta):
    return alpha/(alpha+beta)
    
lam_mod_mean = beta_mean(alpha+1, alpha) # mean of beta distribution

# y1, y2 should be one-hot vectors
for (x1, y1), (x2, y2) in zip(loader1, loader2):
    lam = numpy.random.beta(alpha, alpha)
    x = Variable(lam * x1 + (1. - lam) * x2)
    y = Variable(lam * y1 + (1. - lam) * y2)
    loss = loss_function(net(x), y)  # mixup loss
    loss_scale = torch.abs(loss.detach().data.clone())
    f = net(x1)
    b = y1 - torch.softmax(f, dim=1)
    loss_new = torch.sum(f * b, dim=1)
    loss_new = (1.0 - lam_mod_mean) * torch.sum(torch.abs(loss_new)) / batch_size  # additional loss term
    loss = loss + (mixup_eta * loss_new)  # total loss
    loss_new_scale = torch.abs(loss.detach().data.clone())
    loss = (loss_scale / loss_new_scale) * loss  # loss after scaling
    optimizer.zero_grad()
    loss.backward()
    optimizer.step()
\end{python}

\end{document}